\documentclass[a4paper,12pt]{amsart}

\usepackage{graphicx,natbib}

\addtolength{\textheight}{3cm}
\addtolength{\topmargin}{-2cm}
\addtolength{\textwidth}{4cm}
\addtolength{\oddsidemargin}{-2cm}
\addtolength{\evensidemargin}{-2cm}

\newtheorem{thm}{Theorem}
\newtheorem{lem}{Lemma}
\newtheorem{cor}{Corollary}

\def\ci{\!\perp\!}
\def\nci{\!\not\perp\!}

\begin{document}

\title[]{Finding Consensus Bayesian Network Structures}

\author[]{Jose M. Pe\~{n}a\\
ADIT, Department of Computer and Information Science\\
Link\"oping University, SE-58183 Link\"{o}ping, Sweden\\
jose.m.pena@liu.se}

\date{\today}

\begin{abstract}
Suppose that multiple experts (or learning algorithms) provide us with alternative Bayesian network (BN) structures over a domain, and that we are interested in combining them into a single consensus BN structure. Specifically, we are interested in that the consensus BN structure only represents independences all the given BN structures agree upon and that it has as few parameters associated as possible. In this paper, we prove that there may exist several non-equivalent consensus BN structures and that finding one of them is NP-hard. Thus, we decide to resort to heuristics to find an approximated consensus BN structure. In this paper, we consider the heuristic proposed in \citep{MatzkevichandAbramson1992,MatzkevichandAbramson1993a,MatzkevichandAbramson1993b}. This heuristic builds upon two algorithms, called Methods A and B, for efficiently deriving the minimal directed independence map of a BN structure relative to a given node ordering. Methods A and B are claimed to be correct although no proof is provided (a proof is just sketched). In this paper, we show that Methods A and B are not correct and propose a correction of them.
\end{abstract}

\maketitle

\section{Introduction}\label{sec:intro}

Bayesian networks (BNs) are a popular graphical formalism for representing probability distributions. A BN consists of structure and parameters. The structure, a directed and acyclic graph (DAG), induces a set of independencies that the represented probability distribution satisfies. The parameters specify the conditional probability distribution of each node given its parents in the structure. The BN represents the probability distribution that results from the product of these conditional probability distributions. Typically, a single expert (or learning algorithm) is consulted to construct a BN of the domain at hand. Therefore, there is a risk that the so-constructed BN is not as accurate as it could be if, for instance, the expert has a bias or overlooks certain details. One way to minimize this risk consists in obtaining multiple BNs of the domain from multiple experts and, then, combining them into a single consensus BN. This approach has received significant attention in the literature \citep{MatzkevichandAbramson1992,MatzkevichandAbramson1993a,MatzkevichandAbramson1993b,Maynard-ReidIIandChajewska2001,NielsenandParsons2007,PennockandWellman1999,RichardsonandDomingos2003,delSagradoandMoral2003}. The most relevant of these references is probably \citep{PennockandWellman1999}, because it shows that even if the experts agree on the BN structure, no method for combining the experts' BNs produces a consensus BN that respects some reasonable assumptions and whose structure is the agreed BN structure. Unfortunately, this problem is often overlooked. To avoid it, we propose to combine the experts' BNs in two steps. First, finding the consensus BN structure and, then, finding the consensus parameters for the consensus BN structure. This paper focuses only on the first step. Specifically, we assume that multiple experts provide us with alternative DAG models of a domain, and we are interested in combining them into a single consensus DAG. Specifically, we are interested in that the consensus DAG only represents independences all the given DAGs agree upon and as many of them as possible. In other words, the consensus DAG is the DAG that represents the most independences among all the minimal directed independence (MDI) maps of the intersection of the independence models induced by the given DAGs.\footnote{It is worth mentioning that the term consensus DAG has a different meaning in computational biology \citep{Jacksonetal.2005}. There, the consensus DAG of a given set of DAGs $G^1, \ldots, G^m$ is defined as the DAG that contains the most of the arcs in $G^1, \ldots, G^m$. Therefore, the difficulty lies in keeping as many arcs as possible without creating cycles. Note that, unlike in the present work, a DAG is not interpreted as inducing an independence model in \citep{Jacksonetal.2005}.} To our knowledge, whether the consensus DAG can or cannot be found efficiently is still an open problem. See \citep{MatzkevichandAbramson1992,MatzkevichandAbramson1993a,MatzkevichandAbramson1993b} for more information. In this paper, we redefine the consensus DAG as the DAG that has the fewest parameters associated among all the MDI maps of the intersection of the independence models induced by the given DAGs. This definition is in line with that of finding a DAG to represent a probability distribution $p$. The desired DAG is typically defined as the MDI map of $p$ that has the fewest parameters associated rather than as the MDI map of $p$ that represents the most independences. See, for instance, \citep{Chickeringetal.2004}. The number of parameters associated with a DAG is a measure of the complexity of the DAG, since it is the number of parameters required to specify all the probability distributions that can be represented by the DAG.

In this paper, we prove that there may exist several non-equivalent consensus DAGs and that finding one of them is NP-hard. Thus, we decide to resort to heuristics to find an approximated consensus DAG. In this paper, we consider the following heuristic due to \citet{MatzkevichandAbramson1992,MatzkevichandAbramson1993a,MatzkevichandAbramson1993b}. First, let $\alpha$ denote any ordering of the nodes in the given DAGs, which we denote here as $G^1, \ldots, G^m$. Then, find the MDI map $G^i_{\alpha}$ of each $G^i$ relative to $\alpha$. Finally, let the approximated consensus DAG be the DAG whose arcs are exactly the union of the arcs in $G^1_{\alpha}, \ldots, G^m_{\alpha}$. It should be mentioned that our formulation of the heuristic differs from that in \citep{MatzkevichandAbramson1992,MatzkevichandAbramson1993a,MatzkevichandAbramson1993b} in the following two points. First, the heuristic was introduced under the original definition of consensus DAG. We justify later that the heuristic also makes sense under our definition of consensus DAG. Second, $\alpha$ was originally required to be consistent with one of the given DAGs. We remove this requirement. All in all, a key step in the heuristic is finding the MDI map $G^i_{\alpha}$ of each $G^i$. Since this task is not trivial, \citet{MatzkevichandAbramson1993b} present two algorithms, called Methods A and B, for efficiently deriving $G^i_{\alpha}$ from $G^i$. Methods A and B are claimed to be correct although no proof is provided (a proof is just sketched). In this paper, we show that Methods A and B are not correct and propose a correction of them.

As said, we are not the first to study the problem of finding the consensus DAG. In addition to the works discussed above by \citet{MatzkevichandAbramson1992,MatzkevichandAbramson1993a,MatzkevichandAbramson1993b} and \citet{PennockandWellman1999}, some other works devoted to this problem are \citep{Maynard-ReidIIandChajewska2001,NielsenandParsons2007,RichardsonandDomingos2003,delSagradoandMoral2003}. We elaborate below on the differences between these works and ours. \citet{Maynard-ReidIIandChajewska2001} propose to adapt existing score-based algorithms for learning DAGs from data to the case where the learning data is replaced by the BNs provided by some experts. Their approach suffers the problem pointed out by \citet{PennockandWellman1999}, because it consists essentially in learning a consensus DAG from a combination of the given BNs. A somehow related approach is proposed by \citet{RichardsonandDomingos2003}. Specifically, they propose a Bayesian approach to learning DAGs from data, where the prior probability distribution over DAGs is constructed from the DAGs provided by some experts. Since their approach requires data and does not combine the given DAGs into a single DAG, it addresses a problem rather different from the one in this paper. Moreover, the construction of the prior probability distribution over DAGs ignores the fact that some given DAGs may be different but equivalent. That is, unlike in the present work, a DAG is not interpreted as inducing an independence model. A work that is relatively close to ours is that by \citet{delSagradoandMoral2003}. Specifically, they show how to construct a MDI map of the intersection and union of the independence models induced by the DAGs provided by some experts. However, there are three main differences between their work and ours. First, unlike us, they do not assume that the given DAGs are defined over the same set of nodes. Second, unlike us, they assume that there exists a node ordering that is consistent with all the given DAGs. Third, their goal is to find a MDI map whereas ours is to find the MDI map that has the fewest parameters associated among all the MDI maps, i.e. the consensus DAG. Finally, \citet{NielsenandParsons2007} develop a general framework to construct the consensus DAG gradually. Their framework is general in the sense that it is not tailored to any particular definition of consensus DAG. Instead, it relies upon a score to be defined by the user and that each expert will use to score different extensions to the current partial consensus DAG. The individual scores are then combined to choose the extension to perform. Unfortunately, we do not see how this framework could be applied to our definition of consensus DAG. Specifically, we do not see how each expert could score the extensions independently of the other experts, what the score would look like, or how the scores would be combined. 

It is worth recalling that this paper deals with the combination of probability distributions expressed as BNs. Those readers interested in the combination of probability distributions expressed in non-graphical numerical forms are referred to, for instance, \citep{GenestandZidek1986}. Note also that we are interested in the combination before any data is observed. Those readers interested in the combination after some data has been observed and each expert has updated her beliefs accordingly are referred to, for instance, \citep{NgandAbramson1994}. Finally, note also that we aim at combining the given DAGs into a DAG, the consensus DAG. Those readers interested in finding not a DAG but graphical features (e.g. arcs or paths) all or a significant number of experts agree upon may want to consult \citep{FriedmanandKoller2003,Harteminketal.2002,Pennaetal.2004}, since these works deal with a similar problem.

The rest of the paper is organized as follows. We start by reviewing some preliminary concepts in Section \ref{sec:preliminaries}. We analyze the complexity of finding the consensus DAG in Section \ref{sec:np-hard}. We discuss the heuristic for finding an approximated consensus DAG in more detail in Section \ref{sec:heuristic}. We introduce Methods A and B in Section \ref{sec:methodsab} and show that they are not correct. We correct them in Section \ref{sec:correctness}. We analyze the complexity of the corrected Methods A and B in Section \ref{sec:efficiency} and show that they are more efficient than any other approach we can think of to solve the same problem. We close with some discussion in Section \ref{sec:discussion}.

\section{Preliminaries}\label{sec:preliminaries}

In this section, we review some concepts used in this paper. All the DAGs, probability distributions and independence models in this paper are defined over ${\mathbf V}$, unless otherwise stated. If $A \rightarrow B$ is in a DAG $G$, then we say that $A$ and $B$ are {\em adjacent} in $G$. Moreover, we say that $A$ is a {\em parent} of $B$ and $B$ a {\em child} of $A$ in $G$. We denote the parents of $B$ in $G$ by $Pa_G(B)$. A node is called a {\em sink node} in $G$ if it has no children in $G$. A {\em route} between two nodes $A$ and $B$ in $G$ is a sequence of nodes starting with $A$ and ending with $B$ such that every two consecutive nodes in the sequence are adjacent in $G$. Note that the nodes in a route are not necessarily distinct. The {\em length} of a route is the number of (not necessarily distinct) arcs in the route. We treat all the nodes in $G$ as routes of length zero. A route between $A$ and $B$ is called {\em descending} from $A$ to $B$ if all the arcs in the route are directed towards $B$. If there is a descending route from $A$ to $B$, then $B$ is called a {\em descendant} of $A$. Note that $A$ is a descendant of itself, since we allow routes of length zero. Given a subset ${\mathbf X} \subseteq {\mathbf V}$, a node $A \in {\mathbf X}$ is called {\em maximal} in $G$ if $A$ is not descendant of any node in ${\mathbf X} \setminus \{A\}$ in $G$. Given a route $\rho$ between $A$ and $B$ in $G$ and a route $\rho'$ between $B$ and $C$ in $G$, $\rho \cup \rho'$ denotes the route between $A$ and $C$ in $G$ resulting from appending $\rho'$ to $\rho$. 

The {\em number of parameters} associated with a DAG $G$ is $\sum_{B \in {\mathbf V}} [\prod_{A \in Pa_G(B)} r_A] (r_B - 1)$, where $r_A$ and $r_B$ are the numbers of states of the random variables corresponding to the node $A$ and $B$. An arc $A \rightarrow B$ in $G$ is said to be {\em covered} if $Pa_G(A)=Pa_G(B) \setminus \{A\}$. By {\em covering} an arc $A \rightarrow B$ in $G$ we mean adding to $G$ the smallest set of arcs so that $A \rightarrow B$ becomes covered. We say that a node $C$ is a {\em collider} in a route in a DAG if there exist two nodes $A$ and $B$ such that $A \rightarrow C \leftarrow B$ is a subroute of the route. Note that $A$ and $B$ may coincide. Let ${\mathbf X}$, ${\mathbf Y}$ and ${\mathbf Z}$ denote three disjoint subsets of ${\mathbf V}$. A route in a DAG is said to be {\em ${\mathbf Z}$-active} when (i) every collider node in the route is in ${\mathbf Z}$, and (ii) every non-collider node in the route is outside ${\mathbf Z}$. When there is no route in a DAG $G$ between a node in ${\mathbf X}$ and a node in ${\mathbf Y}$ that is ${\mathbf Z}$-active, we say that ${\mathbf X}$ is {\em separated} from ${\mathbf Y}$ given ${\mathbf Z}$ in $G$ and denote it as ${\mathbf X} \ci_G {\mathbf Y} | {\mathbf Z}$. We denote by ${\mathbf X} \nci_G {\mathbf Y} | {\mathbf Z}$ that ${\mathbf X} \ci_G {\mathbf Y} | {\mathbf Z}$ does not hold. This definition of separation is equivalent to other more common definitions \citep[Section 5.1]{Studeny1998}.

Let ${\mathbf X}$, ${\mathbf Y}$, ${\mathbf Z}$ and ${\mathbf W}$ denote four disjoint subsets of ${\mathbf V}$. Let us abbreviate ${\mathbf X} \cup {\mathbf Y}$ as ${\mathbf X} {\mathbf Y}$. An {\em independence model} $M$ is a set of statements of the form ${\mathbf X} \ci_M {\mathbf Y} | {\mathbf Z}$, meaning that ${\mathbf X}$ is independent of ${\mathbf Y}$ given ${\mathbf Z}$. Given a subset ${\mathbf U} \subseteq {\mathbf V}$, we denote by $[M]_{\mathbf U}$ all the statements in $M$ such that ${\mathbf X}, {\mathbf Y}, {\mathbf Z} \subseteq {\mathbf U}$. Given two independence models $M$ and $N$, we denote by $M \subseteq N$ that if ${\mathbf X} \ci_M {\mathbf Y} | {\mathbf Z}$ then ${\mathbf X} \ci_N {\mathbf Y} | {\mathbf Z}$. We say that $M$ is a {\em graphoid} if it satisfies the following properties: {\em symmetry} ${\mathbf X} \ci_M {\mathbf Y} | {\mathbf Z} \Rightarrow {\mathbf Y} \ci_M {\mathbf X} | {\mathbf Z}$, {\em decomposition} ${\mathbf X} \ci_M {\mathbf Y} {\mathbf W} | {\mathbf Z} \Rightarrow {\mathbf X} \ci_M {\mathbf Y} | {\mathbf Z}$, {\em weak union} ${\mathbf X} \ci_M {\mathbf Y} {\mathbf W} | {\mathbf Z} \Rightarrow {\mathbf X} \ci_M {\mathbf Y} | {\mathbf Z} {\mathbf W}$, {\em contraction} ${\mathbf X} \ci_M {\mathbf Y} | {\mathbf Z} {\mathbf W} \land {\mathbf X} \ci_M {\mathbf W} | {\mathbf Z} \Rightarrow {\mathbf X} \ci_M {\mathbf Y} {\mathbf W} | {\mathbf Z}$, and {\em intersection} ${\mathbf X} \ci_M {\mathbf Y} | {\mathbf Z} {\mathbf W} \land {\mathbf X} \ci_M {\mathbf W} | {\mathbf Z} {\mathbf Y} \Rightarrow {\mathbf X} \ci_M {\mathbf Y} {\mathbf W} | {\mathbf Z}$. The independence model {\em induced by a probability distribution $p$}, denoted as $I(p)$, is the set of probabilistic independences in $p$. The independence model {\em induced by a DAG $G$}, denoted as $I(G)$, is the set of separation statements ${\mathbf X} \ci_G {\mathbf Y} | {\mathbf Z}$. It is known that $I(G)$ is a graphoid \citep[Lemma 3.1]{StudenyandBouckaert1998}. Moreover, $I(G)$ satisfies the {\em composition} property ${\mathbf X} \ci_G {\mathbf Y} | {\mathbf Z} \land {\mathbf X} \ci_G {\mathbf W} | {\mathbf Z} \Rightarrow {\mathbf X} \ci_G {\mathbf Y} {\mathbf W} | {\mathbf Z}$ \citep[Proposition 1]{ChickeringandMeek2002}. Two DAGs $G$ and $H$ are called {\em equivalent} if $I(G)=I(H)$.

A DAG $G$ is a {\em directed independence map} of an independence model $M$ if $I(G) \subseteq M$. Moreover, $G$ is a {\em minimal} directed independence (MDI) map of $M$ if removing any arc from $G$ makes it cease to be a directed independence map of $M$. We say that $G$ and an ordering of its nodes are {\em consistent} when, for every arc $A \rightarrow B$ in $G$, $A$ precedes $B$ in the node ordering. We say that a DAG $G_{\alpha}$ is a MDI map of an independence model $M$ {\em relative to a node ordering $\alpha$} if $G_{\alpha}$ is a MDI map of $M$ and $G_{\alpha}$ is consistent with $\alpha$. If $M$ is a graphoid, then $G_{\alpha}$ is unique \citep[Theorems 4 and 9]{Pearl1988}. Specifically, for each node $A$, $Pa_{G_{\alpha}}(A)$ is the smallest subset ${\mathbf X}$ of the predecessors of $A$ in $\alpha$, $Pre_{\alpha}(A)$, such that $A \ci_M Pre_{\alpha}(A) \setminus {\mathbf X} | {\mathbf X}$.

\section{Finding a Consensus DAG is NP-Hard}\label{sec:np-hard}

Recall that we have defined the consensus DAG of a given set of DAGs $G^1, \ldots, G^m$ as the DAG that has the fewest parameters associated among all the MDI maps of $\cap_{i=1}^m I(G^i)$. A sensible way to start the quest for the consensus DAG is by investigating whether there can exist several non-equivalent consensus DAGs. The following theorem answers this question.

\begin{thm}
There exists a set of DAGs that has two non-equivalent consensus DAGs.
\end{thm}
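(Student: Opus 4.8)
The plan is to exhibit a concrete set of DAGs (over a fixed node set, all variables binary) whose intersection model has two non-equivalent minimal directed independence maps tied for the fewest associated parameters; by definition these are then the sought non-equivalent consensus DAGs.

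Two preliminary observations. First, the intersection of graphoids satisfying composition is again such a graphoid, since every graphoid axiom and the composition property has the form ``if these independences hold, so does that one''; hence $M:=\cap_{i=1}^m I(G^i)$ is a graphoid satisfying composition, so by the uniqueness result stated above it has a \emph{unique} MDI map $M_\alpha$ relative to each node ordering $\alpha$, and every consensus DAG is of the form $M_\alpha$. Second, inserting an arc into a DAG strictly increases its number of parameters whatever the cardinalities of the variables are, which is what lets me reason about parameter counts through skeletons.

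For the construction I would take two experts that share a ``hub'' source $E$ with arcs to $A,B,C,D$, and whose remaining arcs turn $A,B,C,D$ into two different four-node chains: $G^1$ additionally has $A\to B$, $B\to C$, $C\to D$, and $G^2$ additionally has $C\to D$, $D\to A$, $A\to B$. The effect of $E$ is that every nontrivial independence of $I(G^i)$ must place $E$ in the conditioning set, and with $E$ conditioned only the paths inside the residual chain matter; a short path-counting argument then identifies $M$, restricted to $\{A,B,C,D\}$ and with $E$ stripped from conditioning sets, as the intersection of the two chain models, which one computes to be exactly the independence model of the undirected four-cycle $A-B-C-D-A$ --- the standard example of a graphoid whose minimal I-maps form two non-equivalent classes ($K_4$ without the chord $BD$, and $K_4$ without the chord $AC$) of equal size. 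Pulled back to the five-node problem, this says that in \emph{every} MDI map of $M$ the node $E$ is adjacent to each of $A,B,C,D$, the cycle edges $AB,BC,CD,DA$ are present, and at least one of the chords $AC,BD$ is present --- both cannot be dropped, since leaving out $AC$ forces one of $A,C$ to be the last node of $\alpha$ while leaving out $BD$ forces one of $B,D$ to be last. Hence every MDI map has at least nine edges. Orderings whose last node is $A$ or $C$ yield an MDI map with skeleton $K_5$ minus $AC$, no immoralities, and $1+2+4+8+8 = 23$ parameters; orderings whose last node is $B$ or $D$ yield, by the mirror computation, an MDI map with skeleton $K_5$ minus $BD$, no immoralities, and again $23$ parameters. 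These two DAGs have different skeletons, so they are non-equivalent.

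The step I expect to be the real obstacle is proving that $23$ is the \emph{global} minimum over all $120$ orderings, so that these two DAGs are genuinely consensus DAGs. Here I would use the skeleton facts above together with a short case analysis: any acyclic orientation of $K_5$ minus one chord has at least $23$ parameters (placing a degree-$4$ vertex last costs $2^4$ and is never optimal, and the bound is attained exactly by the MDI maps produced above), while orderings with last node $E$ keep both chords and so give an MDI map with skeleton $K_5$, which has $1+2+4+8+16 = 31$ parameters. I would also observe, to rule out any third minimizer, that no MDI map of $M$ contains an immorality at all: an immorality --- necessarily of the form $\{A,C\}\to X$ or $\{B,D\}\to X$, as $AC$ and $BD$ are the only non-adjacencies available --- would need, say, $B,D,E$, then $A,C$, then $X$ to be distinct nodes of $\alpha$, one node too many. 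The remaining work --- writing out $I(G^1)$ and $I(G^2)$ and confirming the restriction of $M$ --- is routine path analysis in the two small DAGs.
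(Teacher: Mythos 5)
Your proposal is correct and rests on the same core idea as the paper's proof: the intersection of two oppositely oriented chain models is the independence model of a chordless four-cycle, whose MDI maps fall into exactly two non-equivalent, equally parameterized classes (the complete graph minus one of the two missing chords, with the dropped chord forcing one of its endpoints to be last in the ordering). The paper realizes this minimally with just the two four-node chains $J \to I \to K \to L$ and $I \to J \to L \to K$ and states the two consensus DAGs without verification, so your extra hub vertex $E$ and the accompanying parameter-count and path analysis are a slightly larger but valid instance of the same construction, supplying the checking the paper omits.
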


\begin{proof}

Consider the following two DAGs over four random variables with the same number of states each:

\begin{table}[h]
\centering
\begin{tabular}{cc}
\begin{tabular}{ccc}
$I$ & $\leftarrow$ & $J$\\
$\downarrow$ & &\\
$K$ & $\rightarrow$ & $L$\\
\end{tabular}
& \hspace{2cm}
\begin{tabular}{ccc}
$I$ & $\rightarrow$ & $J$\\
& & $\downarrow$\\
$K$ & $\leftarrow$ & $L$\\
\end{tabular}
\end{tabular}
\end{table}

Any of the following two non-equivalent DAGs is the consensus DAG of the two DAGs above:

\begin{table}[h]
\centering
\begin{tabular}{cc}
\begin{tabular}{ccc}
$I$ & $\rightarrow$ & $J$\\
$\downarrow$ & $\searrow$ & $\uparrow$\\
$K$ & $\leftarrow$ & $L$\\
\end{tabular}
& \hspace{2cm}
\begin{tabular}{ccc}
$I$ & $\leftarrow$ & $J$\\
$\uparrow$ & $\nearrow$ & $\downarrow$\\
$K$ & $\rightarrow$ & $L$\\
\end{tabular}
\end{tabular}
\end{table}

\end{proof}

A natural follow-up question to investigate is whether a consensus DAG can be found efficiently. Unfortunately, finding a consensus DAG is NP-hard, as we prove below. Specifically, we prove that the following decision problem is NP-hard:

{\bf CONSENSUS}
\begin{itemize}
\item INSTANCE: A set of DAGs $G^1, \ldots, G^m$ over ${\mathbf V}$, and a positive integer $d$.
\item QUESTION: Does there exist a DAG $G$ over ${\mathbf V}$ such that $I(G) \subseteq \cap_{i=1}^m I(G^i)$ and the number of parameters associated with $G$ is not greater than $d$ ?
\end{itemize}

Proving that CONSENSUS is NP-hard implies that finding the consensus DAG is also NP-hard, because if there existed an efficient algorithm for finding the consensus DAG, then we could use it to solve CONSENSUS efficiently. Our proof makes use of the following two decision problems:

{\bf FEEDBACK ARC SET}
\begin{itemize}
\item INSTANCE: A directed graph $G=({\mathbf V},{\mathbf A})$ and a positive integer $k$.
\item QUESTION: Does there exist a subset ${\mathbf B} \subset {\mathbf A}$ such that $|{\mathbf B}| \leq k$ and ${\mathbf B}$ has at least one arc from every directed cycle in $G$ ?
\end{itemize}

{\bf LEARN}
\begin{itemize}
\item INSTANCE: A probability distribution $p$ over ${\mathbf V}$, and a positive integer $d$.
\item QUESTION: Does there exist a DAG $G$ over ${\mathbf V}$ such that $I(G) \subseteq I(p)$ and the number of parameters associated with $G$ is not greater than $d$ ?
\end{itemize}

FEEDBACK ARC SET is NP-complete \citep{GareyandJohnson1979}. FEEDBACK ARC SET remains NP-complete for directed graphs in which the total degree of each vertex is at most three \citep{Gavril1977}. This degree-bounded FEEDBACK ARC SET problem is used in \citep{Chickeringetal.2004} to prove that LEARN is NP-hard. In their proof, \citet{Chickeringetal.2004} use the following polynomial reduction of any instance of the degree-bounded FEEDBACK ARC SET into an instance of LEARN:
\begin{itemize}
\item Let the instance of the degree-bounded FEEDBACK ARC SET consist of the directed graph $F=({\mathbf V}^F,{\mathbf A}^F)$ and the positive integer $k$.
\item Let $L$ denote a DAG whose nodes and arcs are determined from $F$ as follows. For every arc $V^F_i \rightarrow V^F_j$ in ${\mathbf A}^F$, create the following nodes and arcs in $L$: 

\begin{table}[h]
\centering
\begin{tabular}{ccccccccc}
&& $A_{ij \:\: (9)}$ &&&& $D_{ij \:\: (9)}$ &&\\
&& $\downarrow$ &&&& $\downarrow$ &&\\
$V^F_{i \:\: (9)}$ & $\rightarrow$ & $B_{ij \:\: (2)}$ && $H_{ij \:\: (2)}$ && $E_{ij \:\: (2)}$ & $\leftarrow$ & $G_{ij \:\: (9)}$\\
&& $\downarrow$ & $\swarrow$ && $\searrow$ & $\downarrow$ &&\\
&& $C_{ij \:\: (3)}$ &&&& $F_{ij \:\: (2)}$ & $\rightarrow$ & $V^F_{j \:\: (9)}$\\
\end{tabular}
\end{table}

The number in parenthesis besides each node is the number of states of the corresponding random variable. Let ${\mathbf H}^L$ denote all the nodes $H_{ij}$ in $L$, and let ${\mathbf V}^L$ denote the rest of the nodes in $L$.
\item Specify a (join) probability distribution $p({\mathbf H}^L,{\mathbf V}^L)$ such that $I(p({\mathbf H}^L,{\mathbf V}^L)) = I(L)$. 
\item Let the instance of LEARN consist of the (marginal) probability distribution $p({\mathbf V}^L)$ and the positive integer $d$, where $d$ is computed from $F$ and $k$ as shown in \citep[Equation 2]{Chickeringetal.2004}.
\end{itemize}

We now describe how the instance of LEARN resulting from the reduction above can be further reduced into an instance of CONSENSUS in polynomial time:
\begin{itemize}
\item Let $C^1$ denote the DAG over ${\mathbf V}^L$ that has all and only the arcs in $L$ whose both endpoints are in ${\mathbf V}^L$.
\item Let $C^2$ denote the DAG over ${\mathbf V}^L$ that only has the arcs $B_{ij} \rightarrow C_{ij} \leftarrow F_{ij}$ for all $i$ and $j$.
\item Let $C^3$ denote the DAG over ${\mathbf V}^L$ that only has the arcs $C_{ij} \rightarrow F_{ij} \leftarrow E_{ij}$ for all $i$ and $j$.
\item Let the instance of CONSENSUS consist of the DAGs $C^1$, $C^2$ and $C^3$, and the positive integer $d$.
\end{itemize}

\begin{thm}\label{the:np}
CONSENSUS is NP-hard.
\end{thm}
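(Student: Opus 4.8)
The plan is to show that the two reductions displayed above, composed, constitute a polynomial reduction of degree-bounded FEEDBACK ARC SET --- which is NP-complete --- to CONSENSUS. Our part of the reduction is clearly computable in polynomial time: $C^1$ is an induced subgraph of $L$, each of $C^2$ and $C^3$ has $2|{\mathbf A}^F|$ arcs, and $d$ is the integer already produced by the reduction of \citet{Chickeringetal.2004}. So the whole task is to prove that the CONSENSUS instance $(C^1,C^2,C^3,d)$ and the LEARN instance $(p({\mathbf V}^L),d)$ have the same answer; since by \citet{Chickeringetal.2004} the latter answer equals that of the original degree-bounded FEEDBACK ARC SET instance, this suffices.

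First I would show $I(p({\mathbf V}^L)) \subseteq \cap_{i=1}^3 I(C^i)$. Because $p({\mathbf V}^L)$ is a marginal of $p({\mathbf H}^L,{\mathbf V}^L)$ and a conditional independence statement over variables in ${\mathbf V}^L$ holds in a distribution iff it holds in every marginal retaining those variables, we have $I(p({\mathbf V}^L)) = [I(p({\mathbf H}^L,{\mathbf V}^L))]_{{\mathbf V}^L} = [I(L)]_{{\mathbf V}^L}$, the last equality by the defining property of $p({\mathbf H}^L,{\mathbf V}^L)$. Now $[I(L)]_{{\mathbf V}^L} \subseteq I(C^1)$ since $C^1$ is the subgraph of $L$ induced by ${\mathbf V}^L$, so every ${\mathbf Z}$-active route in $C^1$ is also a ${\mathbf Z}$-active route in $L$; and $[I(L)]_{{\mathbf V}^L} \subseteq I(C^2)$, $[I(L)]_{{\mathbf V}^L} \subseteq I(C^3)$ because each connected component of $C^2$ (resp. $C^3$) is a single collider $B_{ij} \rightarrow C_{ij} \leftarrow F_{ij}$ (resp. $C_{ij} \rightarrow F_{ij} \leftarrow E_{ij}$), so the only dependences these DAGs express are among the three nodes of a component, and each of those also holds in $L$ --- from the relevant adjacency, or, for the $B_{ij}$--$F_{ij}$ dependence, from the route through the fork $C_{ij} \leftarrow H_{ij} \rightarrow F_{ij}$ when $C_{ij}$ is conditioned, using that $H_{ij} \notin {\mathbf V}^L$ and hence $H_{ij}$ is never in the conditioning set. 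From $I(p({\mathbf V}^L)) \subseteq \cap_{i=1}^3 I(C^i)$ it follows at once that any DAG witnessing a ``yes'' for LEARN also witnesses a ``yes'' for CONSENSUS, so a ``yes'' for the FEEDBACK ARC SET instance propagates to CONSENSUS.

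For the converse --- a ``yes'' for CONSENSUS forcing a ``yes'' for the FEEDBACK ARC SET instance --- I would take a DAG $G$ over ${\mathbf V}^L$ with $I(G) \subseteq \cap_{i=1}^3 I(C^i)$ and at most $d$ associated parameters, and show that $F$ then has a feedback arc set of size at most $k$. The constraint $I(G) \subseteq I(C^i)$ forbids $G$ from representing any independence that $C^i$ does not; since any two non-adjacent nodes of a DAG are separated by some set, this forces $G$ to contain every adjacency of $C^1$ and, for every arc $V^F_i \rightarrow V^F_j$ of $F$, the adjacency $C_{ij}$--$F_{ij}$ (the latter imposed by $C^2$ and $C^3$), and the DAG orientation $G$ chooses for each $C_{ij}$--$F_{ij}$ plays the role of the ``keep or reverse this arc'' decision. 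From here I would re-run, in this slightly larger independence model, the parameter-counting argument of \citet{Chickeringetal.2004}: accounting for the parameters contributed by the gadget of each arc as a function of its orientation and of the induced orientations of the arcs incident to $V^F_i$ and $V^F_j$, one obtains that $G$ can have at most $d$ parameters only if the set of ``reversed'' arcs forms a feedback arc set of $F$ of size at most $k$.

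The step I expect to be the main obstacle is making this last part airtight. The model $\cap_{i=1}^3 I(C^i)$ is strictly larger than $I(p({\mathbf V}^L))$: because $C^1$ severs each gadget at the hidden fork $C_{ij} \leftarrow H_{ij} \rightarrow F_{ij}$, it --- and hence the intersection --- contains separations, for instance between the two sides of a gadget given a set containing $C_{ij}$, that fail in $L$. Thus $G$ has more freedom than a directed independence map of $p({\mathbf V}^L)$ would, and one must verify that this extra freedom never lets $G$ drop below $d$ parameters unless $F$ already admits a small feedback arc set; equivalently, that the minimum parameter count over directed independence maps is the same for $\cap_{i=1}^3 I(C^i)$ as for $p({\mathbf V}^L)$. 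Establishing this calls for a gadget-by-gadget analysis of which parent sets can occur in a minimum-parameter map, showing that the fork-induced separations are already ``accounted for'' by the adjacency $C_{ij}$--$F_{ij}$ that $C^2$ and $C^3$ force into $G$.
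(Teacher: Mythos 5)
Your forward direction is fine and matches the paper's (the paper phrases it via auxiliary DAGs $L^1,L^2,L^3$ over $({\mathbf H}^L,{\mathbf V}^L)$ and then marginalizes, but the content is the same). The backward direction, however, has a genuine gap, and you have located it yourself: you observe that $\cap_{i=1}^3 I(C^i)$ is strictly larger than $I(p({\mathbf V}^L))$, so a CONSENSUS solution $G$ is not a LEARN solution, and you then say you would ``re-run the parameter-counting argument of \citet{Chickeringetal.2004} in this slightly larger model'' --- but that re-run is precisely the hard part, and it is not carried out. In particular, your plan requires a fresh gadget-by-gadget lower bound on the parameter count of an arbitrary directed independence map of $\cap_{i=1}^3 I(C^i)$, which is exactly the analysis you flag as ``the main obstacle.''

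The paper avoids redoing that analysis by routing back through LEARN rather than going directly to FEEDBACK ARC SET. Given a CONSENSUS solution $C$, it builds a canonical $C'$: start from $C^1$ and, for each edge component, add $B_{ij} \rightarrow F_{ij} \leftarrow C_{ij}$ if $C$ has a directed path from $V_i$ to $V_j$ and $E_{ij} \rightarrow C_{ij} \leftarrow F_{ij}$ otherwise. Then (i) $I(p({\mathbf V}^L)) \subseteq I(C')$ follows from \citep[Lemma 7]{Chickeringetal.2004}, so $C'$ is a LEARN solution provided it has at most $d$ parameters, and one can invoke \citep[Theorem 9]{Chickeringetal.2004} as a black box; and (ii) the bound on the parameters of $C'$ is obtained not by counting parameters of $C$ directly but by comparing the two sequences of covered arc reversals and arc additions that transform $C^1$ into $C$ and into $C'$ \citep[Theorem 4]{Chickering2002}: any covered arc reversal in an edge component costs at least $16$ parameters while the additions in $C'$ cost $10$ or $12$, and if $C$ reverses nothing then the adjacencies forced by $C^2$ and $C^3$ (your observation about $C_{ij}$--$F_{ij}$, pushed one step further to $B_{ij}$--$F_{ij}$ or $C_{ij}$--$E_{ij}$ using conditioning sets containing $C_{ij}$ or $F_{ij}$) together with acyclicity force $C$ to contain the very arcs $C'$ adds. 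That comparison argument, or something playing its role, is the missing ingredient in your proposal; without it the reduction's backward implication is unproven.
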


\begin{proof}

We start by proving that there is a polynomial reduction of any instance ${\mathcal F}$ of the degree-bounded FEEDBACK ARC SET into an instance ${\mathcal C}$ of CONSENSUS. First, reduce ${\mathcal F}$ into an instance ${\mathcal L}$ of LEARN as shown in \citep{Chickeringetal.2004} and, then, reduce ${\mathcal L}$ into ${\mathcal C}$ as shown above.

We now prove that there is a solution to ${\mathcal F}$ iff there is a solution to ${\mathcal C}$. Theorems 8 and 9 in \citep{Chickeringetal.2004} prove that there is a solution to ${\mathcal F}$ iff there is a solution to ${\mathcal L}$. Therefore, it only remains to prove that there is a solution to ${\mathcal L}$ iff there is a solution to ${\mathcal C}$. Let $L$ and $p({\mathbf H}^L,{\mathbf V}^L)$ denote the DAG and the probability distribution constructed in the reduction of ${\mathcal F}$ into ${\mathcal L}$. Recall that $I(p({\mathbf H}^L,{\mathbf V}^L)) = I(L)$. Moreover:

\begin{itemize}
\item Let $L^1$ denote the DAG over $({\mathbf H}^L,{\mathbf V}^L)$ that has all and only the arcs in $L$ whose both endpoints are in ${\mathbf V}^L$.
\item Let $L^2$ denote the DAG over $({\mathbf H}^L,{\mathbf V}^L)$ that only has the arcs $B_{ij} \rightarrow C_{ij} \leftarrow H_{ij} \rightarrow F_{ij}$ for all $i$ and $j$.
\item Let $L^3$ denote the DAG over $({\mathbf H}^L,{\mathbf V}^L)$ that only has the arcs $C_{ij} \leftarrow H_{ij} \rightarrow F_{ij} \leftarrow E_{ij}$ for all $i$ and $j$.
\end{itemize}

Note that any separation statement that holds in $L$ also holds in $L^1$, $L^2$ and $L^3$. Then, $I(p({\mathbf H}^L,{\mathbf V}^L)) = I(L) \subseteq \cap_{i=1}^3 I(L^i)$ and, thus, $I(p({\mathbf V}^L)) \subseteq [\cap_{i=1}^3 I(L^i)]_{{\mathbf V}^L} = \cap_{i=1}^3 [I(L^i)]_{{\mathbf V}^L}$. Let $C^1$, $C^2$ and $C^3$ denote the DAGs constructed in the reduction of ${\mathcal L}$ into ${\mathcal C}$. Note that $[I(L^i)]_{{\mathbf V}^L} = I(C^i)$ for all $i$. Then, $I(p({\mathbf V}^L)) \subseteq \cap_{i=1}^3 I(C^i)$ and, thus, if there is a solution to ${\mathcal L}$ then there is a solution to ${\mathcal C}$. We now prove the opposite. The proof is essentially the same as that of \citep[Theorem 9]{Chickeringetal.2004}. Let us define the $(V_i, V_j)$ edge component of a DAG $G$ over ${\mathbf V}^L$ as the subgraph of $G$ that has all and only the arcs in $G$ whose both endpoints are in $\{V_i, A_{ij}, B_{ij}, C_{ij}, D_{ij}, E_{ij}, F_{ij}, G_{ij}, V_j\}$. Given a solution $C$ to ${\mathcal C}$, we create another solution $C'$ to ${\mathcal C}$ as follows:

\begin{itemize}
\item Initialize $C'$ to $C^1$.
\item For every $(V_i, V_j)$ edge component of $C$, if there is no directed path in $C$ from $V_i$ to $V_j$, then add to $C'$ the arcs $E_{ij} \rightarrow C_{ij} \leftarrow F_{ij}$.
\item For every $(V_i, V_j)$ edge component of $C$, if there is a directed path in $C$ from $V_i$ to $V_j$, then add to $C'$ the arcs $B_{ij} \rightarrow F_{ij} \leftarrow C_{ij}$.
\end{itemize}

Note that $C'$ is acyclic because $C$ is acyclic. Moreover, $I(C') \subseteq \cap_{i=1}^3 I(C^i)$ because $I(C') \subseteq I(C^i)$ for all $i$. In order to be able to conclude that $C'$ is a solution to ${\mathcal C}$, it only remains to prove that the number of parameters associated with $C'$ is not greater than $d$. Specifically, we prove below that $C'$ does not have more parameters associated than $C$, which has less than $d$ parameters associated because it is a solution to ${\mathcal C}$. 

As seen before, $I(C') \subseteq I(C^1)$. Likewise, $I(C) \subseteq I(C^1)$ because $C$ is a solution to ${\mathcal C}$. Thus, there exists a sequence $S$ (resp. $S'$) of covered arc reversals and arc additions that transforms $C^1$ into $C$ (resp. $C'$) \citep[Theorem 4]{Chickering2002}. Note that a covered arc reversal does not modify the number of parameters associated with a DAG, whereas an arc addition increases it \citep[Theorem 3]{Chickering1995}. Thus, $S$ and $S'$ monotonically increase the number of parameters associated with $C^1$ as they transform it. Recall that $C^1$ consists of a series of edge components of the form

\begin{table}[h]
\centering
\begin{tabular}{ccccccccc}
&& $A_{ij \:\: (9)}$ &&&& $D_{ij \:\: (9)}$ &&\\
&& $\downarrow$ &&&& $\downarrow$ &&\\
$V^F_{i \:\: (9)}$ & $\rightarrow$ & $B_{ij \:\: (2)}$ &&&& $E_{ij \:\: (2)}$ & $\leftarrow$ & $G_{ij \:\: (9)}$\\
&& $\downarrow$ &&&& $\downarrow$ &&\\
&& $C_{ij \:\: (3)}$ &&&& $F_{ij \:\: (2)}$ & $\rightarrow$ & $V^F_{j \:\: (9)}$\\
\end{tabular}
\end{table}
\newpage
The number in parenthesis besides each node is the number of states of the corresponding random variable. Let us study how the sequences $S$ and $S'$ modify each edge component of $C^1$. $S'$ simply adds the arcs $B_{ij} \rightarrow F_{ij} \leftarrow C_{ij}$ or the arcs $E_{ij} \rightarrow C_{ij} \leftarrow F_{ij}$. Note that adding the first pair of arcs results in an increase of 10 parameters, whereas adding the second pair of arcs results in an increase of 12 parameters. Unlike $S'$, $S$ may reverse some arc in the edge component. If that is the case, then $S$ must cover the arc first, which implies an increase of at least 16 parameters (covering $F_{ij} \rightarrow V_j$ by adding $E_{ij} \rightarrow V_j$ implies an increase of exactly 16 parameters, whereas any other arc covering implies a larger increase). Then, $S$ implies a larger increase in the number of parameters than $S'$. On the other hand, if $S$ does not reverse any arc in the edge component, then $S$ simply adds the arcs that are in $C$ but not in $C^1$. Note that either $C_{ij} \rightarrow F_{ij}$ or $C_{ij} \leftarrow F_{ij}$ is in $C$, because otherwise $C_{ij} \ci_C F_{ij} | {\mathbf Z}$ for some ${\mathbf Z} \subset {\mathbf V}^L$ which contradicts the fact that $C$ is a solution to ${\mathcal C}$ since $C_{ij} \nci_{C^2} F_{ij} | {\mathbf Z}$. If $C_{ij} \rightarrow F_{ij}$ is in $C$, then either $B_{ij} \rightarrow F_{ij}$ or $B_{ij} \leftarrow F_{ij}$ is in $C$ because otherwise $B_{ij} \ci_C F_{ij} | {\mathbf Z}$ for some ${\mathbf Z} \subset {\mathbf V}^L$ such that $C_{ij} \in {\mathbf Z}$, which contradicts the fact that $C$ is a solution to ${\mathcal C}$ since $B_{ij} \nci_{C^2} F_{ij} | {\mathbf Z}$. As $B_{ij} \leftarrow F_{ij}$ would create a cycle in $C$, $B_{ij} \rightarrow F_{ij}$ is in $C$. Therefore, $S$ adds the arcs $B_{ij} \rightarrow F_{ij} \leftarrow C_{ij}$ and, by construction of $C'$, $S'$ also adds them. Thus, $S$ implies an increase of at least as many parameters as $S'$. On the other hand, if $C_{ij} \leftarrow F_{ij}$ is in $C$, then either $C_{ij} \rightarrow E_{ij}$ or $C_{ij} \leftarrow E_{ij}$ is in $C$ because otherwise $C_{ij} \ci_C E_{ij} | {\mathbf Z}$ for some ${\mathbf Z} \subset {\mathbf V}^L$ such that $F_{ij} \in {\mathbf Z}$, which contradicts the fact that $C$ is a solution to ${\mathcal C}$ since $C_{ij} \nci_{C^3} E_{ij} | {\mathbf Z}$. As $C_{ij} \rightarrow E_{ij}$ would create a cycle in $C$, $C_{ij} \leftarrow E_{ij}$ is in $C$. Therefore, $S$ adds the arcs $E_{ij} \rightarrow C_{ij} \leftarrow F_{ij}$ and, by construction of $C'$, $S'$ adds either the arcs $E_{ij} \rightarrow C_{ij} \leftarrow F_{ij}$ or the arcs $B_{ij} \rightarrow F_{ij} \leftarrow C_{ij}$. In any case, $S$ implies an increase of at least as many parameters as $S'$. Consequently, $C'$ does not have more parameters associated than $C$.

Finally, note that $I(p({\mathbf V}^L)) \subseteq I(C')$ by \citep[Lemma 7]{Chickeringetal.2004}. Thus, if there is a solution to ${\mathcal C}$ then there is a solution to ${\mathcal L}$.

\end{proof}

It is worth noting that our proof above contains two restrictions. First, the number of DAGs to consensuate is three. Second, the number of states of each random variable in ${\mathbf V}^L$ is not arbitrary but prescribed. The first restriction is easy to relax: Our proof can be extended to consensuate more than three DAGs by simply letting $C^i$ be a DAG over ${\mathbf V}^L$ with no arcs for all $i > 3$. However, it is an open question whether CONSENSUS remains NP-hard when the number of DAGs to consensuate is two and/or the number of states of each random variable in ${\mathbf V}^L$ is arbitrary.

The following theorem strentghens the previous one.

\begin{thm}\label{the:np2}
CONSENSUS is NP-complete.
\end{thm}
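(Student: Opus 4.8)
The plan is to appeal to Theorem~\ref{the:np} for the NP-hardness and to establish only membership in NP, the combination of the two giving NP-completeness. For a yes-instance $(G^1,\dots,G^m,d)$ of CONSENSUS the certificate will simply be a DAG $G$ over ${\mathbf V}$; since $G$ has at most $|{\mathbf V}|^2$ arcs, its description has size polynomial in that of the instance. What has to be supplied is a verifier that, given $(G^1,\dots,G^m,d)$ and the candidate $G$, decides in polynomial time whether both $I(G) \subseteq \cap_{i=1}^m I(G^i)$ and the number of parameters associated with $G$ is at most $d$.

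The verifier I would use proceeds in three steps. First, check that $G$ is acyclic and fix an ordering $\alpha$ of ${\mathbf V}$ consistent with $G$, obtained by topological sorting. Second, for every $i \in \{1,\dots,m\}$ and every $A \in {\mathbf V}$, test the single separation statement $A \ci_{G^i} Pre_{\alpha}(A) \setminus Pa_G(A) | Pa_G(A)$; I claim that all these $m|{\mathbf V}|$ tests succeed if and only if $I(G) \subseteq \cap_{i=1}^m I(G^i)$. Each such test asks whether $G^i$ contains a route between $A$ and some node of $Pre_{\alpha}(A) \setminus Pa_G(A)$ that is $Pa_G(A)$-active, a reachability-type question decidable in polynomial time, so the second step is polynomial. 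Third, compute $\sum_{B \in {\mathbf V}} [\prod_{A \in Pa_G(B)} r_A](r_B - 1)$ and compare it with $d$; each factor $\prod_{A \in Pa_G(B)} r_A$ has only $O(\sum_{A \in {\mathbf V}} \log r_A)$ bits and is therefore of polynomial size (the $r_A$ being part of the description of ${\mathbf V}$), so the third step is polynomial as well. Granting the claim, the verifier runs in polynomial time, CONSENSUS is in NP, and together with Theorem~\ref{the:np} it is NP-complete.

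It remains to justify the claim, which I expect to be the only nontrivial point. For the forward direction, each statement $A \ci_{G^i} Pre_{\alpha}(A) \setminus Pa_G(A) | Pa_G(A)$ is an instance of the ordered local directed Markov condition for $G$ relative to $\alpha$, hence belongs to $I(G)$; if $I(G) \subseteq I(G^i)$ it therefore belongs to $I(G^i)$, i.e.\ the test succeeds. For the converse, fix $i$ and suppose all the corresponding tests succeed. Then for every $A$ the set $Pa_G(A)$ is a subset ${\mathbf X}$ of $Pre_{\alpha}(A)$ with $A \ci_{G^i} Pre_{\alpha}(A) \setminus {\mathbf X} | {\mathbf X}$. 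Because $I(G^i)$ is a graphoid, the minimal such set, which by the characterisation recalled in Section~\ref{sec:preliminaries} is $Pa_{G^i_{\alpha}}(A)$, is contained in every such ${\mathbf X}$ and in particular in $Pa_G(A)$ (this containment, which uses the intersection property, is exactly what makes $G^i_{\alpha}$ unique). Hence $G^i_{\alpha}$ is a subgraph of $G$ on the same node set, so $I(G) \subseteq I(G^i_{\alpha})$ since deleting arcs can only add separation statements, while $I(G^i_{\alpha}) \subseteq I(G^i)$ because $G^i_{\alpha}$ is a directed independence map of $I(G^i)$; therefore $I(G) \subseteq I(G^i)$. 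Running this over all $i$ yields $I(G) \subseteq \cap_{i=1}^m I(G^i)$, completing the argument. The main obstacle is precisely this reduction of the exponentially large family of statements comprising ``$I(G) \subseteq \cap_{i=1}^m I(G^i)$'' to the $m|{\mathbf V}|$ local checks; the remaining ingredients, namely polynomiality of route-reachability and of the parameter count, are routine.
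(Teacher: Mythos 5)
Your proposal is correct, and the certificate and verifier are exactly those of the paper: fix a topological ordering $\alpha$ of the candidate DAG $G$, run the $m|{\mathbf V}|$ local separation tests $A \ci_{G^i} Pre_{\alpha}(A) \setminus Pa_G(A) \,|\, Pa_G(A)$, and count parameters. The only divergence is in how you justify that these local tests are equivalent to $I(G) \subseteq \cap_{i=1}^m I(G^i)$. The paper gets this in one line by citing two known facts: $I(G)$ is the graphoid closure of its causal list relative to $\alpha$ (Pearl), and $\cap_{i=1}^m I(G^i)$ is a graphoid (del Sagrado and Moral), so containment of the causal list implies containment of its closure. You instead argue the converse direction directly: if the tests pass for $G^i$, then $Pa_G(A)$ is, for each $A$, one of the admissible parent sets in $Pre_{\alpha}(A)$, and since $I(G^i)$ is a graphoid the minimal such set $Pa_{G^i_{\alpha}}(A)$ is contained in it (closure of the admissible family under intersection), so $G^i_{\alpha}$ is a subgraph of $G$ and $I(G) \subseteq I(G^i_{\alpha}) \subseteq I(G^i)$. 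This is sound — with the paper's route-based notion of separation, removing arcs only adds separation statements, a fact the paper itself uses in the proof of Theorem \ref{the:heuristic} — and it is arguably more self-contained, trading the appeal to Pearl's closure result for the characterisation of $G^i_{\alpha}$ already recalled in Section \ref{sec:preliminaries}. The paper's version is shorter and also exhibits the intersection $\cap_{i=1}^m I(G^i)$ itself as a graphoid, which it reuses elsewhere; yours avoids invoking any property of the intersection beyond what each $I(G^i)$ individually provides. Both are complete proofs of membership in NP, and combined with Theorem \ref{the:np} both yield NP-completeness.
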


\begin{proof}

By Theorem \ref{the:np}, all that remains to prove is that CONSENSUS is in NP, i.e. that we can verify in polynomial time if a given DAG $G$ is a solution to a given instance of CONSENSUS. 

Let $\alpha$ denote any node ordering that is consistent with $G$. The causal list of $G$ relative to $\alpha$ is the set of separation statements $A \ci_{G} Pre_{\alpha}(A) \setminus Pa_{G}(A) | Pa_{G}(A)$ for all node $A$. It is known that $I(G)$ coincides with the closure with respect to the graphoid properties of the causal list of $G$ relative to $\alpha$ \citep[Corollary 7]{Pearl1988}. Therefore, $I(G) \subseteq \cap_{i=1}^m I(G^i)$ iff $A \ci_{G^i} Pre_{\alpha}(A) \setminus Pa_{G}(A) | Pa_{G}(A)$ for all $1 \leq i \leq m$, because $\cap_{i=1}^m I(G^i)$ is a graphoid \citep[Corollary 1]{delSagradoandMoral2003}. Let $n$, $a$ and $a_i$ denote, respectively, the number of nodes in $G$, the number of arcs in $G$, and the number of arcs in $G_i$. Let $b=$ max$_{1 \leq i \leq m} \: a_i$. Checking a separation statement in $G_i$ takes $O(a_i)$ time \citep[p. 530]{Geigeretal.1990}. Then, checking whether $I(G) \subseteq \cap_{i=1}^m I(G^i)$ takes $O(m n b)$ time. Finally, note that computing the number of parameters associated with $G$ takes $O(a)$.

\end{proof}

\section{Finding an Approximated Consensus DAG}\label{sec:heuristic}

Since finding a consensus DAG of some given DAGs is NP-hard, we decide to resort to heuristics to find an approximated consensus DAG. This does not mean that we discard the existence of fast super-polynomial algorithms. It simply means that we do not pursue that possibility in this paper. Specifically, in this paper we consider the following heuristic due to \citet{MatzkevichandAbramson1992,MatzkevichandAbramson1993a,MatzkevichandAbramson1993b}. First, let $\alpha$ denote any ordering of the nodes in the given DAGs, which we denote here as $G^1, \ldots, G^m$. Then, find the MDI map $G^i_{\alpha}$ of each $G^i$ relative to $\alpha$. Finally, let the approximated consensus DAG be the DAG whose arcs are exactly the union of the arcs in $G^1_{\alpha}, \ldots, G^m_{\alpha}$. The following theorem justifies taking the union of the arcs. Specifically, it proves that the DAG returned by the heuristic is the consensus DAG if this was required to be consistent with $\alpha$.

\begin{thm}\label{the:heuristic}
The DAG $H$ returned by the heuristic above is the DAG that has the fewest parameters associated among all the MDI maps of $\cap_{i=1}^m I(G^i)$ relative to $\alpha$.
\end{thm}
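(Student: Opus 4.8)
The plan is to exploit the characterization of the MDI map relative to an ordering given at the end of Section~\ref{sec:preliminaries}: since each $I(G^i)$ is a graphoid and, by \citep[Corollary 1]{delSagradoandMoral2003}, so is $M := \cap_{i=1}^m I(G^i)$, the MDI map $M_\alpha$ of $M$ relative to $\alpha$ exists and is unique, with $Pa_{M_\alpha}(A)$ equal to the smallest ${\mathbf X} \subseteq Pre_\alpha(A)$ such that $A \ci_M Pre_\alpha(A) \setminus {\mathbf X} \mid {\mathbf X}$. The key claim will be that the DAG $H$ built by the heuristic is exactly this $M_\alpha$. Once that equality is established, the theorem is immediate: $M_\alpha$ is the \emph{unique} MDI map of $M$ relative to $\alpha$, so trivially it is the one among all such MDI maps with the fewest associated parameters.

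First I would record that each $G^i_\alpha$, being the MDI map of $I(G^i)$ relative to $\alpha$, satisfies $Pa_{G^i_\alpha}(A) = $ the smallest ${\mathbf X} \subseteq Pre_\alpha(A)$ with $A \ci_{G^i} Pre_\alpha(A)\setminus{\mathbf X} \mid {\mathbf X}$. By construction $H$ is the DAG with $Pa_H(A) = \bigcup_{i=1}^m Pa_{G^i_\alpha}(A)$, and $H$ is consistent with $\alpha$ because each $G^i_\alpha$ is. It remains to show $Pa_H(A) = Pa_{M_\alpha}(A)$ for every $A$. I would prove the two inclusions separately. For $Pa_{M_\alpha}(A) \subseteq Pa_H(A)$: the defining independence $A \ci_M Pre_\alpha(A)\setminus Pa_H(A) \mid Pa_H(A)$ must be verified; since $M = \cap_i I(G^i)$, this amounts to checking $A \ci_{G^i} Pre_\alpha(A)\setminus Pa_H(A) \mid Pa_H(A)$ for each $i$. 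Starting from $A \ci_{G^i} Pre_\alpha(A)\setminus Pa_{G^i_\alpha}(A) \mid Pa_{G^i_\alpha}(A)$ and the fact that $Pa_{G^i_\alpha}(A) \subseteq Pa_H(A) \subseteq Pre_\alpha(A)$, an application of weak union (a graphoid property of $I(G^i)$) gives $A \ci_{G^i} Pre_\alpha(A)\setminus Pa_H(A) \mid Pa_H(A)$. Hence $A \ci_M Pre_\alpha(A)\setminus Pa_H(A)\mid Pa_H(A)$, and by minimality of $Pa_{M_\alpha}(A)$ we get $Pa_{M_\alpha}(A) \subseteq Pa_H(A)$.

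The reverse inclusion $Pa_H(A) \subseteq Pa_{M_\alpha}(A)$ is the step I expect to be the main obstacle, because it is where one really needs that the parent sets $Pa_{G^i_\alpha}(A)$ are \emph{minimal} rather than merely sufficient, and that graphoids (in particular using the intersection property) force minimal sufficient sets to be unique and downward-stable. Concretely, fix $i$; I want $Pa_{G^i_\alpha}(A) \subseteq Pa_{M_\alpha}(A)$. We know $A \ci_M Pre_\alpha(A)\setminus Pa_{M_\alpha}(A)\mid Pa_{M_\alpha}(A)$, hence $A \ci_{G^i} Pre_\alpha(A)\setminus Pa_{M_\alpha}(A)\mid Pa_{M_\alpha}(A)$. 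Now let ${\mathbf X} := Pa_{M_\alpha}(A) \cap Pre_\alpha(A) = Pa_{M_\alpha}(A)$ (it already lies in $Pre_\alpha(A)$ by consistency with $\alpha$). The point is that in the graphoid $I(G^i)$ the smallest subset ${\mathbf Y}$ of $Pre_\alpha(A)$ with $A \ci_{G^i} Pre_\alpha(A)\setminus{\mathbf Y}\mid{\mathbf Y}$ is unique and is contained in \emph{every} subset ${\mathbf Y}'$ of $Pre_\alpha(A)$ satisfying the same independence; this is exactly Pearl's argument (\citep[Theorems 4 and 9]{Pearl1988}) behind uniqueness of the MDI map relative to $\alpha$, and it uses the intersection property. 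Applying it with ${\mathbf Y}' = Pa_{M_\alpha}(A)$ yields $Pa_{G^i_\alpha}(A) \subseteq Pa_{M_\alpha}(A)$. Taking the union over $i$ gives $Pa_H(A) \subseteq Pa_{M_\alpha}(A)$, completing the equality $H = M_\alpha$ and hence the theorem. Throughout, the only facts used are that $I(G^i)$ and $M$ are graphoids and the characterization of MDI maps relative to an ordering, all available from the preliminaries.
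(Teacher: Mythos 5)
Your proof is correct, and it rests on the same two pillars as the paper's --- the graphoid (in particular, intersection) property of $\cap_{i=1}^m I(G^i)$ and the uniqueness of the MDI map relative to $\alpha$ --- but it is organized differently. The paper first verifies that $H$ is a directed independence map of $\cap_{i=1}^m I(G^i)$ by the purely graphical observation that each $G^i_{\alpha}$ is a subgraph of $H$, and then establishes minimality by contradiction: if removing an arc $A \rightarrow B$ present in some $G^j_{\alpha}$ preserved the independence-map property, the intersection property would yield a strictly smaller sufficient parent set for $B$ in $G^j$, contradicting the minimality of $G^j_{\alpha}$. You instead identify $H$ with the unique MDI map $M_{\alpha}$ of $\cap_{i=1}^m I(G^i)$ relative to $\alpha$ node by node, showing $Pa_H(A)=Pa_{M_{\alpha}}(A)$: weak union gives $Pa_{M_{\alpha}}(A)\subseteq Pa_H(A)$, and the Markov-boundary containment property (which is exactly where intersection enters for you, playing the role of the paper's contradiction) gives the reverse inclusion. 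Both steps of yours check out --- in particular, the containment lemma you invoke (in a graphoid the minimal sufficient subset of $Pre_{\alpha}(A)$ is contained in every sufficient subset) does follow from weak union plus intersection, as in Pearl's uniqueness argument. The two proofs are essentially equivalent in content; yours makes the pointwise structure $Pa_{M_{\alpha}}(A)=\cup_{i=1}^m Pa_{G^i_{\alpha}}(A)$ explicit, which is arguably more informative, at the cost of invoking rather than reproving the containment lemma. Both then conclude identically: since the MDI map relative to $\alpha$ is unique, the ``fewest parameters'' clause holds vacuously.
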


\begin{proof}

We start by proving that $H$ is a MDI map of $\cap_{i=1}^m I(G^i)$. First, we show that $I(H) \subseteq \cap_{i=1}^m I(G^i)$. It suffices to note that $I(H) \subseteq \cap_{i=1}^m I(G^i_{\alpha})$ because each $G^i_{\alpha}$ is a subgraph of $H$, and that $\cap_{i=1}^m I(G^i_{\alpha}) \subseteq \cap_{i=1}^m I(G^i)$ because $I(G^i_{\alpha}) \subseteq I(G^i)$ for all $i$. Now, assume to the contrary that the DAG $H'$ resulting from removing an arc $A \rightarrow B$ from $H$ satisfies that $I(H') \subseteq \cap_{i=1}^m I(G^i)$. By construction of $H$, $A \rightarrow B$ is in $G^i_{\alpha}$ for some $i$, say $i=j$. Note that $B \ci_{H'} Pre_{\alpha}(B) \setminus Pa_{H'}(B) | Pa_{H'}(B)$, which implies $B \ci_{G^j} Pre_{\alpha}(B) \setminus ((\cup_{i=1}^m Pa_{G^i_{\alpha}}(B)) \setminus \{A\}) | (\cup_{i=1}^m Pa_{G^i_{\alpha}}(B)) \setminus \{A\}$ because $Pa_{H'}(B)=(\cup_{i=1}^m Pa_{G^i_{\alpha}}(B)) \setminus \{A\}$ and $I(H') \subseteq \cap_{i=1}^m I(G^i)$. Note also that $B \ci_{G^j_{\alpha}} Pre_{\alpha}(B) \setminus Pa_{G^j_{\alpha}}(B) | Pa_{G^j_{\alpha}}(B)$, which implies $B \ci_{G^j} Pre_{\alpha}(B) \setminus Pa_{G^j_{\alpha}}(B) | Pa_{G^j_{\alpha}}(B)$ because $I(G^j_{\alpha}) \subseteq I(G^j)$. Therefore, $B \ci_{G^j} Pre_{\alpha}(B) \setminus (Pa_{G^j_{\alpha}}(B) \setminus \{A\}) | Pa_{G^j_{\alpha}}(B) \setminus \{A\}$ by intersection. However, this contradicts the fact that $G^j_{\alpha}$ is the MDI map of $G^j$ relative to $\alpha$. Then, $H$ is a MDI map of $\cap_{i=1}^m I(G^i)$ relative to $\alpha$.

Finally, note that $\cap_{i=1}^m I(G^i)$ is a graphoid \citep[Corollary 1]{delSagradoandMoral2003}. Consequently, $H$ is the only MDI map of $\cap_{i=1}^m I(G^i)$ relative to $\alpha$.

\end{proof}

A key step in the heuristic above is, of course, choosing a good node ordering $\alpha$. Unfortunately, the fact that CONSENSUS is NP-hard implies that it is also NP-hard to find the best node ordering $\alpha$, i.e. the node ordering that makes the heuristic to return the MDI map of $\cap_{i=1}^m I(G^i)$ that has the fewest parameters associated. To see it, note that if there existed an efficient algorithm for finding the best node ordering, then Theorem \ref{the:heuristic} would imply that we could solve CONSENSUS efficiently by running the heuristic with the best node ordering. 

In the last sentence, we have implicitly assumed that the heuristic is efficient, which implies that we have implicitly assumed that we can efficiently find the MDI map $G^i_{\alpha}$ of each $G^i$. The rest of this paper shows that this assumption is correct.

\section{Methods A and B are not Correct}\label{sec:methodsab}

\begin{figure}[t]
\centering
\small
\begin{tabular}{rl}
\hline
\\
& \underline{Construct $\beta$($G$, $\alpha$)}\\
\\
& /* Given a DAG $G$ and a node ordering $\alpha$, the algorithm returns a node ordering $\beta$ that\\
& is consistent with $G$ and as close to $\alpha$ as possible */\\
\\
1 & $\beta=\emptyset$\\
2 & $G'=G$\\
3 & Let $A$ denote a sink node in $G'$\\
/* 3 & Let $A$ denote the rightmost node in $\alpha$ that is a sink node in $G'$ */\\
4 & Add $A$ as the leftmost node in $\beta$\\
5 & Let $B$ denote the right neighbor of $A$ in $\beta$\\
6 & If $B \neq \emptyset$ and $A \notin Pa_G(B)$ and $A$ is to the right of $B$ in $\alpha$ then\\
7 & \hspace{0.2cm} Interchange $A$ and $B$ in $\beta$\\
8 & \hspace{0.2cm} Go to line 5\\
9 & Remove $A$ and all its incoming arcs from $G'$\\
10 & If $G' \neq \emptyset$ then go to line 3\\
11 & Return $\beta$\\
\\
& \underline{Method A($G$, $\alpha$)}\\
\\
& /* Given a DAG $G$ and a node ordering $\alpha$, the algorithm returns $G_{\alpha}$ */\\
\\
1 & $\beta$=Construct $\beta$($G$, $\alpha$)\\
2 & Let $Y$ denote the leftmost node in $\beta$ whose left neighbor in $\beta$ is to its right in $\alpha$\\
3 & Let $Z$ denote the left neighbor of $Y$ in $\beta$\\
4 & If $Z$ is to the right of $Y$ in $\alpha$ then\\
5 & \hspace{0.2cm} If $Z \rightarrow Y$ is in $G$ then cover and reverse $Z \rightarrow Y$ in $G$\\
6 & \hspace{0.2cm} Interchange $Y$ and $Z$ in $\beta$\\
7 & \hspace{0.2cm} Go to line 3\\
8 & If $\beta \neq \alpha$ then go to line 2\\
9 & Return $G$\\
\\
& \underline{Method B($G$, $\alpha$)}\\
\\
& /* Given a DAG $G$ and a node ordering $\alpha$, the algorithm returns $G_{\alpha}$ */\\
\\
1 & $\beta$=Construct $\beta$($G$, $\alpha$)\\
2 & Let $Y$ denote the leftmost node in $\beta$ whose right neighbor in $\beta$ is to its left in $\alpha$\\
3 & Let $Z$ denote the right neighbor of $Y$ in $\beta$\\
4 & If $Z$ is to the left of $Y$ in $\alpha$ then\\
5 & \hspace{0.2cm} If $Y \rightarrow Z$ is in $G$ then cover and reverse $Y \rightarrow Z$ in $G$\\
6 & \hspace{0.2cm} Interchange $Y$ and $Z$ in $\beta$\\
7 & \hspace{0.2cm} Go to line 3\\
8 & If $\beta \neq \alpha$ then go to line 2\\
9 & Return $G$\\
\\
\hline
\\
\end{tabular}
\caption{Construct $\beta$, and Methods A and B. Our correction of Construct $\beta$ consists in replacing line 3 with the line in comments under it.}\label{fig:methodsab}
\end{figure}

\citet{MatzkevichandAbramson1993b} do not only propose the heuristic discussed in the previous section, but they also present two algorithms, called Methods A and B, for efficiently deriving the MDI map $G_{\alpha}$ of a DAG $G$ relative to a node ordering $\alpha$. The algorithms work iteratively by covering and reversing an arc in $G$ until the resulting DAG is consistent with $\alpha$. It is obvious that such a way of working produces a directed independence map of $G$. However, in order to arrive at $G_{\alpha}$, the arc to cover and reverse in each iteration must be carefully chosen. The pseudocode of Methods A and B can be seen in Figure \ref{fig:methodsab}. Method A starts by calling Construct $\beta$ to derive a node ordering $\beta$ that is consistent with $G$ and as close to $\alpha$ as possible (line 6). By $\beta$ being as close to $\alpha$ as possible, we mean that the number of arcs Methods A and B will later cover and reverse is kept at a minimum, because Methods A and B will use $\beta$ to choose the arc to cover and reverse in each iteration. In particular, Method A finds the leftmost node in $\beta$ that should be interchanged with its left neighbor (line 2) and it repeatedly interchanges this node with its left neighbor (lines 3-4 and 6-7). Each of these interchanges is preceded by covering and reversing the corresponding arc in $G$ (line 5). Method B is essentially identical to Method A. The only differences between them are that the word "right" is replaced by the word "left" and vice versa in lines 2-4, and that the arcs point in opposite directions in line 5.

\begin{figure}[t]
\centering
\begin{tabular}{c}
\hline
\\
\includegraphics[scale=0.4]{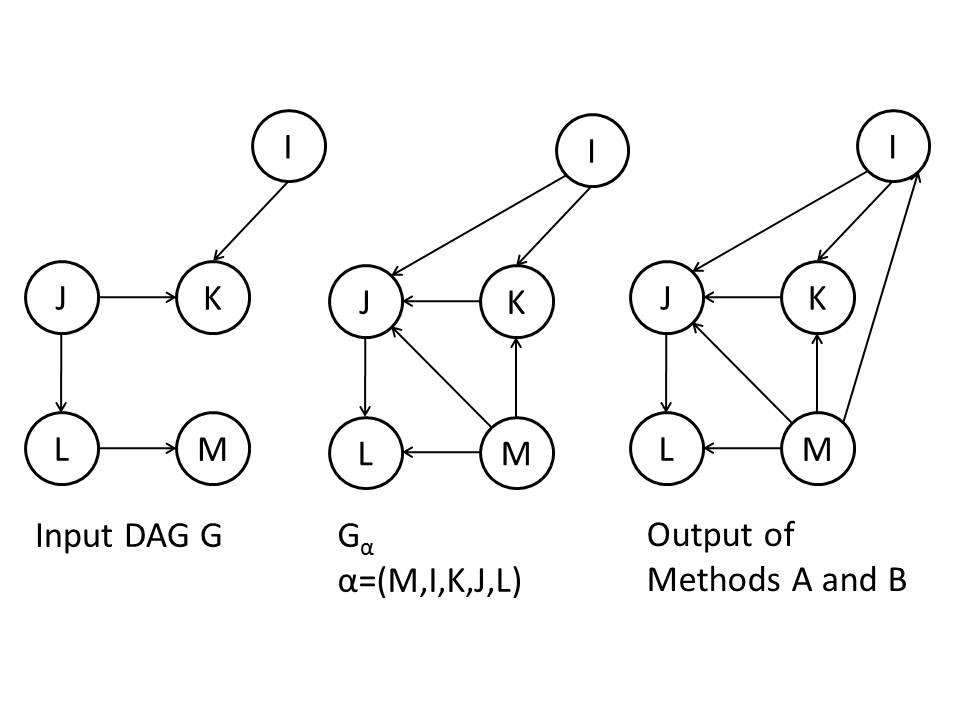}
\\
\hline
\\
\end{tabular}
\caption{A counterexample to the correctness of Methods A and B.}\label{fig:counter}
\end{figure}

Methods A and B are claimed to be correct in \citep[Theorem 4 and Corollary 2]{MatzkevichandAbramson1993b} although no proof is provided (a proof is just sketched). The following counterexample shows that Methods A and B are actually not correct. Let $G$ be the DAG in the left-hand side of Figure \ref{fig:counter}. Let $\alpha=(M,I,K,J,L)$. Then, we can make use of the characterization introduced in Section \ref{sec:preliminaries} to see that $G_{\alpha}$ is the DAG in the center of Figure \ref{fig:counter}. However, Methods A and B return the DAG in the right-hand side of Figure \ref{fig:counter}. To see it, we follow the execution of Methods A and B step by step. First, Methods A and B construct $\beta$ by calling Construct $\beta$, which runs as follows:
\begin{enumerate}
\item Initially, $\beta=\emptyset$ and $G'=G$.
\item Select the sink node $M$ in $G'$. Then, $\beta=(M)$. Remove $M$ and its incoming arcs from $G'$.
\item Select the sink node $L$ in $G'$. Then, $\beta=(L,M)$. No interchange in $\beta$ is performed because $L \in Pa_{G}(M)$. Remove $L$ and its incoming arcs from $G'$.
\item Select the sink node $K$ in $G'$. Then, $\beta=(K,L,M)$. No interchange in $\beta$ is performed because $K$ is to the left of $L$ in $\alpha$. Remove $K$ and its incoming arcs from $G'$.
\item Select the sink node $J$ in $G'$. Then, $\beta=(J,K,L,M)$. No interchange in $\beta$ is performed because $J \in Pa_{G}(K)$.
\item Select the sink node $I$ in $G'$. Then, $\beta=(I,J,K,L,M)$. No interchange in $\beta$ is performed because $I$ is to the left of $J$ in $\alpha$.
\end{enumerate}

When Construct $\beta$ ends, Methods A and B continue as follows:
\begin{enumerate}
\setcounter{enumi}{6}
\item Initially, $\beta=(I,J,K,L,M)$.
\item Add the arc $I \rightarrow J$ and reverse the arc $J \rightarrow K$ in $G$. Interchange $J$ and $K$ in $\beta$. Then, $\beta=(I,K,J,L,M)$.
\item Add the arc $J \rightarrow M$ and reverse the arc $L \rightarrow M$ in $G$. Interchange $L$ and $M$ in $\beta$. Then, $\beta=(I,K,J,M,L)$.
\item Add the arcs $I \rightarrow M$ and $K \rightarrow M$, and reverse the arc $J \rightarrow M$ in $G$. Interchange $J$ and $M$ in $\beta$. Then, $\beta=(I,K,M,J,L)$.
\item Reverse the arc $K \rightarrow M$ in $G$. Interchange $K$ and $M$ in $\beta$. Then, $\beta=(I,M,K,J,L)$.
\item Reverse the arc $I \rightarrow M$ in $G$. Interchange $I$ and $M$ in $\beta$. Then, $\beta=(M,I,K,J,L)=\alpha$.
\end{enumerate}

As a matter of fact, one can see as early as in step (8) above that Methods A and B will fail: One can see that $I$ and $M$ are not separated in the DAG resulting from step (8), which implies that $I$ and $M$ will not be separated in the DAG returned by Methods A and B, because covering and reversing arcs never introduces new separation statements. However, $I$ and $M$ are separated in $G_{\alpha}$.

Note that we constructed $\beta$ by selecting first $M$, then $L$, then $K$, then $J$, and finally $I$. However, we could have selected first $K$, then $I$, then $M$, then $L$, and finally $J$, which would have resulted in $\beta=(J,L,M,I,K)$. With this $\beta$, Methods A and B return $G_{\alpha}$. Therefore, it makes a difference which sink node is selected in line 3 of Construct $\beta$. However, Construct $\beta$ overlooks this detail. We propose correcting Construct $\beta$ by replacing line 3 by "Let $A$ denote the rightmost node in $\alpha$ that is a sink node in $G'$". Hereinafter, we assume that any call to Construct $\beta$ is a call to the corrected version thereof. The rest of this paper is devoted to prove that Methods A and B now do return $G_{\alpha}$.

\section{The Corrected Methods A and B are Correct}\label{sec:correctness}

\begin{figure}[t]
\centering
\small
\begin{tabular}{rl}
\hline
\\
& \underline{Method A2($G$, $\alpha$)}\\
\\
& /* Given a DAG $G$ and a node ordering $\alpha$, the algorithm returns $G_{\alpha}$ */\\
\\
1 & $\beta$=Construct $\beta$($G$, $\alpha$)\\
2 & Let $Y$ denote the leftmost node in $\beta$ that has not been considered before\\
3 & Let $Z$ denote the left neighbor of $Y$ in $\beta$\\
4 & If $Z \neq \emptyset$ and $Z$ is to the right of $Y$ in $\alpha$ then\\
5 & \hspace{0.2cm} If $Z \rightarrow Y$ is in $G$ then cover and reverse $Z \rightarrow Y$ in $G$\\
6 & \hspace{0.2cm} Interchange $Y$ and $Z$ in $\beta$\\
7 & \hspace{0.2cm} Go to line 3\\
8 & If $\beta \neq \alpha$ then go to line 2\\
9 & Return $G$\\
\\
& \underline{Method B2($G$, $\alpha$)}\\
\\
& /* Given a DAG $G$ and a node ordering $\alpha$, the algorithm returns $G_{\alpha}$ */\\
\\
1 & $\beta$=Construct $\beta$($G$, $\alpha$)\\
2 & Let $Y$ denote the rightmost node in $\alpha$ that has not been considered before\\
3 & Let $Z$ denote the right neighbor of $Y$ in $\beta$\\
4 & If $Z \neq \emptyset$ and $Z$ is to the left of $Y$ in $\alpha$ then\\
5 & \hspace{0.2cm} If $Y \rightarrow Z$ is in $G$ then cover and reverse $Y \rightarrow Z$ in $G$\\
6 & \hspace{0.2cm} Interchange $Y$ and $Z$ in $\beta$\\
7 & \hspace{0.2cm} Go to line 3\\
8 & If $\beta \neq \alpha$ then go to line 2\\
9 & Return $G$\\
\\
\hline
\\
\end{tabular}
\caption{Methods A2 and B2.}\label{fig:methodsa2b2b3}
\end{figure}

Before proving that Methods A and B are correct, we introduce some auxiliary lemmas. Their proof can be found in the appendix. Let us call {\em percolating $Y$ right-to-left in $\beta$} to iterating through lines 3-7 in Method A while possible. Let us modify Method A by replacing line 2 by "Let $Y$ denote the leftmost node in $\beta$ that has not been considered before" and by adding the check $Z \neq \emptyset$ to line 4. The pseudocode of the resulting algorithm, which we call Method A2, can be seen in Figure \ref{fig:methodsa2b2b3}. Method A2 percolates right-to-left in $\beta$ one by one all the nodes in the order in which they appear in $\beta$.

\begin{lem}\label{lem:a=a2}
Method A($G$, $\alpha$) and Method A2($G$, $\alpha$) return the same DAG.
\end{lem}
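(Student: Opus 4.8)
The plan is to show that Methods A and A2 perform exactly the same sequence of cover-and-reverse operations on $G$, so that they produce identical output. The key observation is that both algorithms maintain a node ordering $\beta$ and repeatedly interchange adjacent nodes, each interchange being accompanied by covering and reversing an arc when the corresponding arc is present; the only question is whether the set and order of adjacent transpositions carried out is the same. I would argue that each is a correct realization of the following abstract process: starting from $\beta=\mathrm{Construct}\,\beta(G,\alpha)$, sort $\beta$ into $\alpha$ by a specific sequence of adjacent transpositions, namely those performed by insertion sort that scans $\beta$ from left to right and, for each node in turn, moves it leftward past every already-scanned node that $\alpha$ places after it.

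First I would make precise what ``percolating $Y$ right-to-left in $\beta$'' does in Method A2: once $Y$ is chosen as the leftmost not-yet-considered node (line 2), lines 3--7 repeatedly swap $Y$ leftward with its current left neighbour $Z$ as long as $Z\neq\emptyset$ and $Z$ is to the right of $Y$ in $\alpha$. So after processing $Y$, the prefix of $\beta$ consisting of the already-considered nodes is in $\alpha$-order, and $Y$ has been inserted into its correct position within that prefix. By induction on the number of considered nodes, after all nodes have been considered $\beta=\alpha$, and the multiset of adjacent transpositions performed is exactly the insertion-sort transposition sequence determined by the initial $\beta$ and the target $\alpha$. Next I would show Method A performs the same transpositions, just discovered in a different bookkeeping order. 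In Method A, line 2 picks the \emph{leftmost} node whose left neighbour in $\beta$ is to its right in $\alpha$ — call it $Y$ — and then lines 3--7 percolate $Y$ leftward exactly as in A2. The point is that when Method A picks this $Y$, every node strictly to its left in $\beta$ already sits in $\alpha$-order (otherwise a leftmost offending node would have been chosen instead), which is precisely the invariant A2 maintains on its considered prefix; and conversely, once A2 finishes percolating a node $Y$, the next node it considers that actually triggers a percolation is the leftmost node currently out of order with its left neighbour. Hence the two algorithms pick the same $Y$ at each ``productive'' step and percolate it identically; A2 additionally ``considers'' nodes that need no movement, but those contribute no operations. So the sequences of cover-and-reverse operations coincide, and therefore so do the returned DAGs.

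The technical care — and the main obstacle — lies in the matching of the two loop structures, because they are indexed differently: Method A re-scans from the left for the leftmost out-of-order node each time line 8 sends it back to line 2, whereas Method A2 walks through $\beta$ positionally, never revisiting a node. I would handle this by formulating a single loop invariant that holds at the top of line 2 in both algorithms — ``every node among the first $k$ positions of the current $\beta$ is in $\alpha$-order, where $k$ is the position of the node about to be considered'' — and proving that one productive iteration of Method A and the corresponding productive iteration(s) of Method A2 both re-establish it while applying the same transpositions to $\beta$ and the same operations to $G$. A subtle point to check is that in Method A2 the added guard $Z\neq\emptyset$ (line 4) is exactly what prevents the leftmost node from attempting to percolate past the empty left neighbour, matching Method A's implicit behaviour where the leftmost node is never selected in line 2 (it has no left neighbour). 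Once the invariant argument is set up, the equality of the returned DAGs follows because $G$ is modified only through these shared cover-and-reverse steps, applied in the same order.
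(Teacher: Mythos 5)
Your argument is correct and reaches the paper's conclusion by a genuinely different route. The paper proceeds by induction on the number of interchanges Method A performs: it peels off the first interchange of $Y$ and $Z$, uses the induction hypothesis to rewrite the remainder of Method A's run as a run of Method A2, commutes that first interchange past the percolations of the nodes to its left, and observes that the subsequent percolation of $Z$ is a no-op, thereby reassembling Method A's run into Method A2's run. You instead run the two algorithms in lockstep under a single invariant --- the prefix of already-handled nodes is in $\alpha$-order --- and argue that the productive percolations coincide step by step, A2's extra considerations contributing no operations. Both proofs ultimately rest on the same two facts (Method A's choice of the leftmost out-of-order node forces the prefix to its left to be sorted; considering an in-order node performs nothing), but your decomposition is more direct and avoids the paper's rearrangement bookkeeping. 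To make the lockstep claim airtight you should add one explicit verification: between two consecutive productive nodes, each intervening node of the original $\beta$ is the $\alpha$-maximum of its prefix, so at the moment the next productive node is reached the current $\beta$ is identical in both algorithms (the sorted original prefix followed by the untouched suffix). This is what guarantees that the two percolations traverse the same nodes in the same order and hence apply identical cover-and-reverse operations to identical intermediate DAGs, which is needed because a cover-and-reverse step depends on the current state of $G$ and not only on the pair being swapped.
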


\begin{lem}\label{lem:a2=b}
Method A2($G$, $\alpha$) and Method B($G$, $\alpha$) return the same DAG.
\end{lem}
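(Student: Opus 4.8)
The plan is to show that Method~A2 and Method~B produce the same sequence of arc coverings and reversals, and hence return the same DAG. Both algorithms start by calling Construct~$\beta$ with the same arguments, so they begin from the same node ordering $\beta$ (which is consistent with $G$) and the same DAG $G$. The key observation is that the two algorithms are carrying out the same bubble-sort of $\beta$ into $\alpha$, but scanning from opposite ends: Method~A2 repeatedly takes the leftmost not-yet-considered node of $\beta$ and percolates it right-to-left past every left neighbour that is to its right in $\alpha$, whereas Method~B repeatedly takes the leftmost node whose right neighbour is to its left in $\alpha$ and percolates that neighbour configuration left-to-right. I would first argue that an adjacent transposition of two nodes $Y,Z$ (with $Z$ immediately left of $Y$ in $\beta$) is performed by Method~A2 exactly when $Z$ is to the right of $Y$ in $\alpha$, and is performed by Method~B exactly when the same pair of consecutive nodes is out of $\alpha$-order; since ``$Z$ is to the right of $Y$ in $\alpha$'' is precisely the condition that the consecutive pair $(Z,Y)$ in $\beta$ is inverted relative to $\alpha$, the \emph{set} of adjacent transpositions that must be performed is the same for both algorithms — namely, all inversions of the permutation taking $\beta$ to $\alpha$.

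The substance of the argument is then that the \emph{order} in which these transpositions are applied does not matter, \emph{provided} that whenever two nodes $Z,Y$ are swapped (because they are $\alpha$-inverted and currently consecutive in $\beta$), the arc between them, if present, is in the direction $Z\to Y$ for Method~A2's bookkeeping and $Y\to Z$ for Method~B's — but these are the same physical arc, and line~5 in each method covers-and-reverses exactly that arc. So I would establish the following loop invariant, maintained by both algorithms: after any number of steps, (i) the current $\beta$ is obtained from the original $\beta$ by some sequence of adjacent transpositions of $\alpha$-inverted pairs, (ii) the current $\beta$ is consistent with the current $G$, and (iii) the current $G$ equals the result of covering-and-reversing, in $G$, exactly the arcs whose endpoints have had their $\beta$-order swapped so far. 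Invariant~(ii) is what guarantees that when a swap of consecutive $Z,Y$ is about to be done, at most one of $Z\to Y$, $Y\to Z$ is in $G$, and if an arc is present it points from the current-left node to the current-right node — so the two methods' line~5 conditions (``$Z\to Y$ in $G$'' resp.\ ``$Y\to Z$ in $G$'') refer to the same event and trigger the same covered reversal. A Church–Rosser / diamond-style argument on adjacent-transposition sequences then shows that any two orders of performing the full set of required swaps yield the same final $G$: if two disjoint inverted pairs are available, doing them in either order commutes trivially (disjoint coverings on disjoint node sets), and the interaction of overlapping consecutive pairs $X,Y,Z$ is handled by checking the one genuinely non-commuting local configuration and confirming the resulting $G$ and $\beta$ agree.

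I expect the main obstacle to be the overlapping case: when three nodes $X,Y,Z$ are consecutive in $\beta$ with more than one inverted pair among them, Method~A2 (pushing $X$ leftward first, say) and Method~B (pulling from the right) may cover-and-reverse the two arcs in a different order, and a covered arc reversal can change which further arcs must be added when a second reversal is performed. Here I would lean on the known facts recalled in the paper — that covered arc reversals do not change $I(G)$ and that the final DAG consistent with $\alpha$ that is reachable by covered reversals and required additions is determined by $I(G)$ and $\alpha$ alone (this is essentially the uniqueness of the MDI map relative to $\alpha$ for the graphoid $I(G)$, via the characterisation of $Pa_{G_\alpha}(\cdot)$ in Section~\ref{sec:preliminaries}, together with the fact, used already in Section~\ref{sec:methodsab}, that covering and reversing arcs never introduces new separation statements). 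Concretely: both algorithms only ever perform covered reversals and the additions those force, so each returns a DAG $G'$ with $I(G')\subseteq I(G)$ that is consistent with $\alpha$; once one shows each returned DAG is in fact the \emph{minimal} such (no arc removable), uniqueness of $G_\alpha$ forces the two outputs to coincide. Thus the cleanest route may be to prove Lemma~\ref{lem:a2=b} by showing \emph{both} Method~A2$(G,\alpha)$ and Method~B$(G,\alpha)$ return $G_\alpha$ — but since the correctness of Methods~A/B relative to $G_\alpha$ is exactly what the remaining lemmas build toward, I would instead keep the argument self-contained at the combinatorial level: set up the loop invariant above, verify it is preserved by one iteration of each method, and close with the diamond lemma on adjacent-transposition orderings to conclude the final $(\,G,\beta=\alpha\,)$ pair is independent of the scan direction.
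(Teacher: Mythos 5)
Your reduction to a confluence (``diamond lemma'') argument on adjacent transpositions is where the proposal breaks. It is true that both methods perform exactly the set of inversions between $\beta$ and $\alpha$, and that two swaps on disjoint node pairs commute (covering and reversing $U \rightarrow V$ only alters the parent sets of $U$ and $V$). But the remaining local check --- the braid configuration of three consecutive nodes $X,Y,Z$ in $\beta$ with all three pairs inverted, where one order performs the swaps $(X,Y),(X,Z),(Y,Z)$ and the other $(Y,Z),(X,Z),(X,Y)$ --- does \emph{not} go through: the final DAG genuinely depends on which order is used. Take $G$ with arcs $A \rightarrow B \rightarrow C$ only, $\beta=(A,B,C)$ and $\alpha=(C,B,A)$. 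Reversing $A \rightarrow B$ first (it is already covered), then swapping the non-adjacent pair $A,C$, then reversing $B \rightarrow C$ yields the two-arc DAG $C \rightarrow B \rightarrow A$, which is $G_{\alpha}$. Reversing $B \rightarrow C$ first forces the covering arc $A \rightarrow C$ to be added, and the run ends with the three arcs $C \rightarrow B$, $C \rightarrow A$, $B \rightarrow A$, a strict supergraph of $G_{\alpha}$. So the ``one genuinely non-commuting local configuration'' you propose to verify does not in fact commute, your invariant~(iii) is not even well defined (the result of covering-and-reversing a \emph{set} of arcs depends on the order in which they are processed), and order-independence of the final $(G,\beta)$ is simply false for arbitrary scan orders --- which is consistent with the paper's own counterexample showing that the outcome of this process is sensitive to such choices.

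Your fallback --- show that each method outputs a minimal directed independence map consistent with $\alpha$ and invoke uniqueness of $G_{\alpha}$ --- is sound in principle but begs the question: minimality of the output is exactly the content of Theorem~\ref{the:correctness}, whose proof is routed through these equivalence lemmas, and you supply no independent minimality argument. What is actually needed, and what the paper does (by repeating the induction of Lemma~\ref{lem:a=a2} with the roles of $Y$ and $Z$ exchanged), is to show that Methods~A2 and~B perform the \emph{same sequence} of covered reversals and arc additions, not merely the same set: one rearranges the run of one method into the run of the other using only commutations of swaps on disjoint node pairs and the observation that certain percolations are no-ops, never a braid move.
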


Let us call {\em percolating $Y$ left-to-right in $\beta$} to iterating through lines 3-7 in Method B while possible. Let us modify Method B by replacing line 2 by "Let $Y$ denote the rightmost node in $\alpha$ that has not been considered before" and by adding the check $Z \neq \emptyset$ to line 4. The pseudocode of the resulting algorithm, which we call Method B2, can be seen in Figure \ref{fig:methodsa2b2b3}. Method B2 percolates left-to-right in $\beta$ one by one all the nodes in the reverse order in which they appear in $\alpha$. 

\begin{lem}\label{lem:b=b2}
Method B($G$, $\alpha$) and Method B2($G$, $\alpha$) return the same DAG.
\end{lem}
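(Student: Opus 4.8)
The plan is to show that Method B and Method B2 produce the same sequence of arc coverings/reversals, hence return the same DAG. Both algorithms start by computing the same $\beta=$Construct $\beta(G,\alpha)$, and both work by repeatedly selecting a node and "percolating $Y$ left-to-right in $\beta$'' (iterating lines 3--7). The only difference is the rule for choosing $Y$ in line~2: Method~B picks the leftmost node in $\beta$ whose right neighbour in $\beta$ is to its left in $\alpha$ (i.e. the leftmost node that is currently "out of order'' with its successor), whereas Method~B2 picks the rightmost not-yet-considered node of $\alpha$. So the lemma reduces to proving that these two selection rules drive the same computation. I would prove this by induction on the number of outer iterations, maintaining the invariant that after $k$ iterations both algorithms hold the same current DAG and the same current $\beta$, and moreover that $\beta$ has the form $\alpha_{\le}\cdot\gamma$, where $\alpha_{\le}$ is a suffix-consistent block (the nodes already "locked in'' agree with the tail of $\alpha$) and $\gamma$ is the unprocessed prefix; the node Method~B2 picks next is exactly the rightmost $\alpha$-node sitting in $\gamma$.

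The key steps, in order, are: (i) observe that percolating a node $Y$ left-to-right stops exactly when $Y$'s right neighbour in $\beta$ is to its right in $\alpha$ (or $Y$ is rightmost), so after percolating the chosen $Y$ it is correctly placed relative to everything to its right; (ii) show that once $Y$ has been percolated and placed at its final position, no subsequent percolation of a node chosen by either rule will move $Y$ again --- for Method~B2 this is immediate because later-chosen nodes are to the left of $Y$ in $\alpha$ and percolating them left-to-right cannot push past a node that should precede them; for Method~B one must check that the leftmost-out-of-order node is always strictly to the left of all already-settled nodes; (iii) identify the leftmost out-of-order node selected by Method~B with the rightmost unprocessed $\alpha$-node selected by Method~B2 under the maintained invariant; (iv) conclude that the two algorithms perform the same interchange-and-arc-reversal steps in the same order, and since both halt precisely when $\beta=\alpha$, they return the same $G$. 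Lemma~\ref{lem:b=b2}'s statement is the mirror image (swap "left'' and "right'', reverse arc directions) of the situation handled for Method~A in Lemmas~\ref{lem:a=a2}, so I would also point out that the argument is symmetric to the proof of Lemma~\ref{lem:a=a2} and can be obtained from it by reversing the node ordering.

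The main obstacle I expect is step~(ii)/(iii): verifying that "leftmost node out of order with its right neighbour'' and "rightmost unprocessed node of $\alpha$'' genuinely coincide throughout, rather than just at the start. The subtlety is that a single left-to-right percolation of $Y$ can change the relative $\beta$-order of many pairs at once, so one has to argue carefully that no new "out-of-order'' pair is created to the left of the settled region, and that the prefix $\gamma$ shrinks by exactly the node just processed. Pinning down the precise invariant on the shape of $\beta$ (the settled suffix equals a suffix of $\alpha$, and the processed nodes are exactly the $\alpha$-suffix currently occupying that block) is where the real work lies; once that invariant is stated correctly, the inductive step should be a routine case analysis on whether line~4's condition fires.
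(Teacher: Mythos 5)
Your overall plan---an induction that relates the two selection rules step by step---is in the spirit of the paper's proof, but the pivotal claims (iii) and (iv) are false: the two methods do \emph{not} select the same node at each stage and do \emph{not} perform the interchange/arc operations in the same order, so the identification you propose to maintain fails already at the first iteration. Concretely, take $\alpha=(A,B,C,D)$ and let $G$ have exactly the arcs $B\rightarrow A$ and $D\rightarrow C$. The corrected Construct $\beta$ returns $\beta=(B,A,D,C)$. Method B's line 2 picks the leftmost out-of-order node, namely $B$ (its right neighbour $A$ precedes it in $\alpha$), so Method B covers and reverses $B\rightarrow A$ first and only afterwards handles $D\rightarrow C$. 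Method B2's line 2 picks the rightmost node of $\alpha$, namely $D$, so it covers and reverses $D\rightarrow C$ first and reaches $B\rightarrow A$ only two outer iterations later. The operation multisets agree but their orders differ, so no argument asserting that the two runs are step-by-step identical can succeed.

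What the proof actually needs---and what the paper supplies---is a commutation argument. The paper inducts on the number of times Method B executes line 6: it peels off Method B's first operation, on the pair $(Y,Z)$, applies the induction hypothesis to the remainder (which is then a full run of Method B2), and shows that this first $(Y,Z)$ operation can be pushed past the percolations of all nodes that precede $Y$ in Method B2's schedule (the nodes to the right of $Y$ in $\alpha$). Each such node is to the right of $Y$ in $\beta$ when percolated, so its percolation only involves nodes strictly to the right of $Z$ and leaves the parent sets of $Y$ and $Z$ untouched, while the $(Y,Z)$ covering and reversal alters only the parent sets of $Y$ and $Z$; hence the two blocks of operations commute and the reordered run is exactly a run of Method B2. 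Your proposal is missing this reordering step entirely; if you replace (iii) and (iv) with it, the rest of your outline (the settled $\alpha$-suffix invariant, and the observation that later percolations never disturb already-placed nodes) becomes usable. A smaller correction: Lemma \ref{lem:b=b2} is not the mirror image of Lemma \ref{lem:a=a2}, because Method A2 selects by position in $\beta$ while Method B2 selects by position in $\alpha$; the mirror-image pairing is between Method A2 and Method B, i.e.\ Lemma \ref{lem:a2=b}.
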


\begin{figure}[t]
\centering
\begin{tabular}{c}
\hline
\\
\includegraphics[scale=0.6]{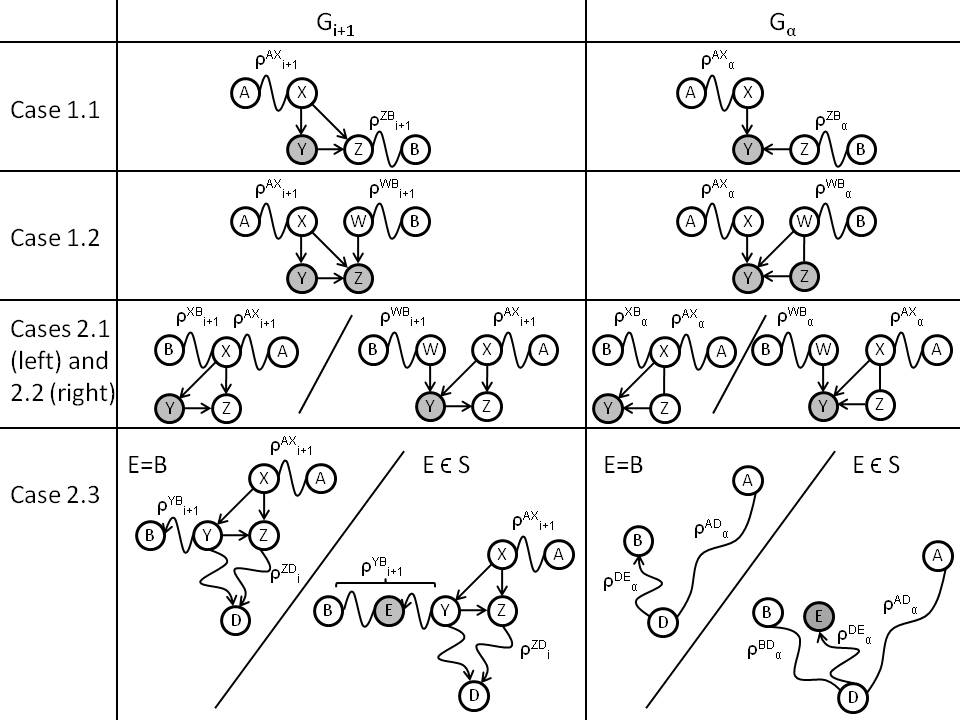}
\\
\\
\hline
\\
\end{tabular}
\caption{Different cases in the proof of Theorem \ref{the:correctness}. Only the relevant subgraphs of $G_{i+1}$ and $G_{\alpha}$ are depicted. An undirected edge between two nodes denotes that the nodes are adjacent. A curved edge between two nodes denotes an ${\mathbf S}$-active route between the two nodes. If the curved edge is directed, then the route is descending. A grey node denotes a node that is in ${\mathbf S}$.}\label{fig:cases}
\end{figure}

We are now ready to prove the main result of this paper.

\begin{thm}\label{the:correctness}
Let $G_{\alpha}$ denote the MDI map of a DAG $G$ relative to a node ordering $\alpha$. Then, Method A($G$, $\alpha$) and Method B($G$, $\alpha$) return $G_{\alpha}$.
\end{thm}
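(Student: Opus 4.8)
The plan is to reduce, by Lemmas \ref{lem:a=a2} and \ref{lem:a2=b}, to proving that Method A2($G$, $\alpha$) returns $G_{\alpha}$; since those lemmas say that Methods A, A2 and B all return the same DAG, this suffices. Write $G = G_{0}, G_{1}, \ldots, G_{T}$ for the DAGs obtained after the successive interchanges in lines 3--7 of Method A2, and $\beta_{0}, \ldots, \beta_{T}$ for the corresponding orderings. First I would record two easy facts. The interchanges turn $\beta_{0}$ into $\alpha$ the way insertion sort does, and a single cover-and-reverse of $Z \rightarrow Y$ together with the interchange of the consecutive nodes $Z, Y$ in $\beta$ preserves the consistency of the current ordering with the current DAG; since $\beta_{0}$ is consistent with $G_{0}$ by construction, $\beta_{T} = \alpha$ is consistent with $G_{T}$. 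Moreover, covering an arc only removes separation statements and reversing a covered arc removes none (it yields an equivalent DAG), so $I(G_{T}) \subseteq I(G)$.

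From these facts I would get the easy half. Each statement $W \ci_{G_{T}} Pre_{\alpha}(W) \setminus Pa_{G_{T}}(W) | Pa_{G_{T}}(W)$ of the causal list of $G_{T}$ relative to $\alpha$ then belongs to $I(G)$, so by minimality of $Pa_{G_{\alpha}}(W)$ --- the unique smallest such set, since $I(G)$ is a graphoid \citep[Theorems 4 and 9]{Pearl1988} --- we have $Pa_{G_{\alpha}}(W) \subseteq Pa_{G_{T}}(W)$ for every $W$; that is, $G_{\alpha}$ is a subgraph of $G_{T}$. Applying weak union to the causal list of $G_{\alpha}$ relative to $\alpha$ gives $W \ci_{G_{\alpha}} Pre_{\alpha}(W) \setminus Pa_{G_{T}}(W) | Pa_{G_{T}}(W)$ for every $W$, and since $I(G_{T})$ is the graphoid closure of its causal list \citep[Corollary 7]{Pearl1988} and $I(G_{\alpha})$ is a graphoid, $I(G_{T}) \subseteq I(G_{\alpha})$. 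So once we also have $I(G_{\alpha}) \subseteq I(G_{T})$, it follows that $I(G_{T}) = I(G_{\alpha})$; then $G_{T}$ and $G_{\alpha}$ are both MDI maps of the graphoid $I(G_{\alpha})$ relative to $\alpha$ (minimality of $G_{T}$ here because deleting an arc from a DAG strictly enlarges its independence model), so uniqueness \citep[Theorems 4 and 9]{Pearl1988} gives $G_{T} = G_{\alpha}$.

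The heart of the proof is therefore to show $I(G_{\alpha}) \subseteq I(G_{i})$ for all $i$, by induction on $i$, the base case $i = 0$ being the definition of $G_{\alpha}$. For the inductive step, $G_{i+1}$ comes from $G_{i}$ by covering and reversing an arc $Z \rightarrow Y$, with $Z, Y$ consecutive in $\beta_{i}$ and $Y$ preceding $Z$ in $\alpha$ (if $Z \rightarrow Y \notin G_{i}$ there is nothing to prove). Since $Z \rightarrow Y \in G_{i}$ we have $Z \nci_{G_{i}} Y | {\mathbf S}$ for every ${\mathbf S}$ disjoint from $\{Y,Z\}$, so by the induction hypothesis $Z \nci_{G_{\alpha}} Y | {\mathbf S}$ for every such ${\mathbf S}$; together with $Y$ preceding $Z$ in $\alpha$ and the characterization of $G_{\alpha}$, this forces $Y \rightarrow Z \in G_{\alpha}$. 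Now suppose for contradiction that $A \ci_{G_{\alpha}} B | {\mathbf S}$ but there is an ${\mathbf S}$-active route between $A$ and $B$ in $G_{i+1}$, and pick a shortest such route $\rho$. The only arcs in $G_{i+1}$ that are not in $G_{i}$ are $Y \rightarrow Z$ and arcs $W \rightarrow Y$ with $W \in Pa_{G_{i}}(Z)$ or $W \rightarrow Z$ with $W \in Pa_{G_{i}}(Y)$, and $Z \rightarrow Y$ is the only arc whose direction changed, so $\rho$ must traverse one of these; otherwise $\rho$ would be an ${\mathbf S}$-active route in $G_{i}$, contradicting the induction hypothesis. I would then split into cases --- those of Figure \ref{fig:cases} --- according to how $\rho$ meets $Y \rightarrow Z$ and the arcs added by the covering and to whether $Y, Z \in {\mathbf S}$, and in each case produce either a strictly shorter ${\mathbf S}$-active route between $A$ and $B$ in $G_{i+1}$ (contradicting the choice of $\rho$), or an ${\mathbf S}$-active route between $A$ and $B$ in $G_{i}$ (contradicting the induction hypothesis), or an ${\mathbf S}$-active route between $A$ and $B$ in $G_{\alpha}$ (contradicting $A \ci_{G_{\alpha}} B | {\mathbf S}$). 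The reroutings would use that $Y \rightarrow Z \in G_{\alpha}$, that $Y$ precedes $Z$ in $\alpha$, and that an added arc $W \rightarrow Y$ (resp. $W \rightarrow Z$) is backed by a route $W \rightarrow Z \rightarrow Y$ (resp. the arc $W \rightarrow Y$) in $G_{i}$, which by the induction hypothesis supplies a matching active connection in $G_{\alpha}$.

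I expect this last case analysis to be the main obstacle. A covered-arc reversal flips the collider status of $Y$ and $Z$ along a route and the covering step adds several arcs simultaneously, so the step cannot be undone locally: the replacement route has to be rebuilt from the parent sets $Pa_{G_{i}}(Y)$ and $Pa_{G_{i}}(Z)$ while carefully tracking which traversed nodes are colliders and which lie in ${\mathbf S}$, and it must not grow longer, so that the ``shorter route'' horn of the contradiction stays available. Reducing a shortest $\rho$ to the finitely many configurations of Figure \ref{fig:cases} and discharging each of them is where the real work lies.
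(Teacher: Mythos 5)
Your outer frame matches the paper's: reduce via the lemmas to one normalized method, show the output is consistent with $\alpha$ with $I(G_n)\subseteq I(G)$, and reduce everything to proving $I(G_{\alpha})\subseteq I(G_i)$ by induction on $i$; your derivation of $Y\rightarrow Z\in G_{\alpha}$ and your endgame (subgraph plus equality of independence models) are fine. But the entire content of the theorem lives in the case analysis you defer, and the specific way you have set it up would not let you complete it. First, you treat ``cover and reverse'' as one atomic step; the paper instead decomposes it into a sequence of \emph{single} arc additions followed by one covered-arc reversal, analyzes the reversal trivially (it preserves $I(\cdot)$), and for each single added arc runs an inner induction on the \emph{number of occurrences of that added arc} in the active route. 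Second, your well-ordering (shortest route) is the wrong one: in the hardest case (the added arc is $X\rightarrow Y$ with $Y\notin{\mathbf S}$, Case 2.3 of the paper) the argument needs the fact that no descendant of $Z$ in $G_i$ lies in ${\mathbf S}$, which is obtained by exhibiting an alternative active route that detours down to such a descendant and back; that route is \emph{longer} than $\rho$ but uses the added arc fewer times, so it contradicts occurrence-minimality but not your length-minimality. None of your three horns (shorter route in $G_{i+1}$, route in $G_i$, route in $G_{\alpha}$) applies to it, so your contradiction scheme stalls exactly where the real difficulty is. That case then further requires the unique maximal element $D$ (in $G_{\alpha}$) of the descendants of $Y$ in $G_i$, via \citep[Lemma 29]{Chickering2002}, to rebuild the route in $G_{\alpha}$ --- none of which is hinted at in your sketch.

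The decisive omission, though, is that your proposal never says where the \emph{correction} to Construct $\beta$ is used. The theorem is false for the uncorrected algorithm --- the paper exhibits a counterexample --- so any valid proof must invoke the corrected sink-selection rule at some point. In the paper it enters in Case 2.3.3, through the invariant that, because the normalized method percolates nodes in the reverse order of $\alpha$ and Construct $\beta$ selects the rightmost-in-$\alpha$ sink, every node to the right of the currently percolated node in $\alpha$ is already to its right in $\beta$; this is also what pins down the orientations $X\rightarrow Y$ and $Y\leftarrow W$ in $G_{\alpha}$ that your ``reroutings'' would need. A sketch that would go through verbatim for the uncorrected algorithm cannot be completed, so this is a genuine gap rather than a deferral of routine detail. (A minor additional caveat: the paper's percolation invariant is stated for Method B2, which processes nodes in reverse $\alpha$-order; if you insist on working with Method A2 you must derive the analogous invariant for its $\beta$-order processing, which is not automatic.)
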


\begin{proof}

By Lemmas \ref{lem:a=a2}-\ref{lem:b=b2}, it suffices to prove that Method B2($G$, $\alpha$) returns $G_{\alpha}$. It is evident that Method B2 transforms $\beta$ into $\alpha$ and, thus, that it halts at some point. Therefore, Method B2 performs a finite sequence of $n$ modifications (arc additions and covered arc reversals) to $G$. Let $G_i$ denote the DAG resulting from the first $i$ modifications to $G$, and let $G_0=G$. Specifically, Method B2 constructs $G_{i+1}$ from $G_i$ by either (i) reversing the covered arc $Y \rightarrow Z$, or (ii) adding the arc $X \rightarrow Z$ for some $X \in Pa_{G_i}(Y) \setminus Pa_{G_i}(Z)$, or (iii) adding the arc $X \rightarrow Y$ for some $X \in Pa_{G_i}(Z) \setminus Pa_{G_i}(Y)$. Note that $I(G_{i+1}) \subseteq I(G_{i})$ for all $0 \leq i < n$ and, thus, that $I(G_{n}) \subseteq I(G_{0})$.

We start by proving that $G_i$ is a DAG that is consistent with $\beta$ for all $0 \leq i \leq n$. Since this is true for $G_0$ due to line 1, it suffices to prove that if $G_i$ is a DAG that is consistent with $\beta$ then so is $G_{i+1}$ for all $0 \leq i < n$. We consider the following four cases.

\begin{description}

\item[Case 1] Method B2 constructs $G_{i+1}$ from $G_i$ by reversing the covered arc $Y \rightarrow Z$. Then, $G_{i+1}$ is a DAG because reversing a covered arc does not create any cycle \citep[Lemma 1]{Chickering1995}. Moreover, note that $Y$ and $Z$ are interchanged in $\beta$ immediately after the covered arc reversal. Thus, $G_{i+1}$ is consistent with $\beta$. 

\item[Case 2] Method B2 constructs $G_{i+1}$ from $G_i$ by adding the arc $X \rightarrow Z$ for some $X \in Pa_{G_i}(Y) \setminus Pa_{G_i}(Z)$. Note that $X$ is to the left of $Y$ and $Y$ to the left of $Z$ in $\beta$, because $G_i$ is consistent with $\beta$. Then, $X$ is to the left of $Z$ in $\beta$ and, thus, $G_{i+1}$ is a DAG that is consistent with $\beta$.

\item[Case 3] Method B2 constructs $G_{i+1}$ from $G_i$ by adding the arc $X \rightarrow Y$ for some $X \in Pa_{G_i}(Z) \setminus Pa_{G_i}(Y)$. Note that $X$ is to the left of $Z$ in $\beta$ because $G_i$ is consistent with $\beta$, and $Y$ is the left neighbor of $Z$ in $\beta$ (recall line 3). Then, $X$ is to the left of $Y$ in $\beta$ and, thus, $G_{i+1}$ is a DAG that is consistent with $\beta$.

\item[Case 4] Note that $\beta$ may get modified before Method B2 constructs $G_{i+1}$ from $G_i$. Specifically, this happens when Method B2 executes lines 5-6 but there is no arc between $Y$ and $Z$ in $G_i$. However, the fact that $G_i$ is consistent with $\beta$ before $Y$ and $Z$ are interchanged in $\beta$ and the fact that $Y$ and $Z$ are neighbors in $\beta$ (recall line 3) imply that $G_i$ is consistent with $\beta$ after $Y$ and $Z$ have been interchanged.

\end{description}

Since Method B2 transforms $\beta$ into $\alpha$, it follows from the result proven above that $G_n$ is a DAG that is consistent with $\alpha$. In order to prove the theorem, i.e. that $G_n=G_{\alpha}$, all that remains to prove is that $I(G_{\alpha}) \subseteq I(G_n)$. To see it, note that $G_n=G_{\alpha}$ follows from $I(G_{\alpha}) \subseteq I(G_n)$, $I(G_n) \subseteq I(G_0)$, the fact that $G_n$ is a DAG that is consistent with $\alpha$, and the fact that $G_{\alpha}$ is the unique MDI map of $G_0$ relative to $\alpha$. Recall that $G_{\alpha}$ is guaranteed to be unique because $I(G_0)$ is a graphoid.

The rest of the proof is devoted to prove that $I(G_{\alpha}) \subseteq I(G_n)$. Specifically, we prove that if $I(G_{\alpha}) \subseteq I(G_i)$ then $I(G_{\alpha}) \subseteq I(G_{i+1})$ for all $0 \leq i < n$. Note that this implies that $I(G_{\alpha}) \subseteq I(G_n)$ because $I(G_{\alpha}) \subseteq I(G_0)$ by definition of MDI map. First, we prove it when Method B2 constructs $G_{i+1}$ from $G_i$ by reversing the covered arc $Y \rightarrow Z$. That the arc reversed is covered implies that $I(G_{i+1})=I(G_i)$ \citep[Lemma 1]{Chickering1995}. Thus, $I(G_{\alpha}) \subseteq I(G_{i+1})$ because $I(G_{\alpha}) \subseteq I(G_i)$.

Now, we prove that if $I(G_{\alpha}) \subseteq I(G_i)$ then $I(G_{\alpha}) \subseteq I(G_{i+1})$ for all $0 \leq i < n$ when Method B2 constructs $G_{i+1}$ from $G_i$ by adding an arc. Specifically, we prove that if there is an ${\mathbf S}$-active route $\rho^{AB}_{i+1}$ between two nodes $A$ and $B$ in $G_{i+1}$, then there is an ${\mathbf S}$-active route between $A$ and $B$ in $G_{\alpha}$. We prove this result by induction on the number of occurrences of the added arc in $\rho^{AB}_{i+1}$. We assume without loss of generality that the added arc occurs in $\rho^{AB}_{i+1}$ as few or fewer times than in any other ${\mathbf S}$-active route between $A$ and $B$ in $G_{i+1}$. We call this the minimality property of $\rho^{AB}_{i+1}$.\footnote{It is not difficult to show that the number of occurrences of the added arc in $\rho^{AB}_{i+1}$ is then at most two (see Case 2.1 for some intuition). However, the proof of the theorem is simpler if we ignore this fact.} If the number of occurrences of the added arc in $\rho^{AB}_{i+1}$ is zero, then $\rho^{AB}_{i+1}$ is an ${\mathbf S}$-active route between $A$ and $B$ in $G_i$ too and, thus, there is an ${\mathbf S}$-active route between $A$ and $B$ in $G_{\alpha}$ since $I(G_{\alpha}) \subseteq I(G_i)$. Assume as induction hypothesis that the result holds for up to $k$ occurrences of the added arc in $\rho^{AB}_{i+1}$. We now prove it for $k+1$ occurrences. We consider the following two cases. Each case is illustrated in Figure \ref{fig:cases}.

\begin{description}

\item[Case 1] Method B2 constructs $G_{i+1}$ from $G_i$ by adding the arc $X \rightarrow Z$ for some $X \in Pa_{G_i}(Y) \setminus Pa_{G_i}(Z)$. Note that $X \rightarrow Z$ occurs in $\rho^{AB}_{i+1}$.\footnote{Note that maybe $A=X$ and/or $B=Z$.} Let $\rho^{AB}_{i+1} = \rho^{AX}_{i+1} \cup X \rightarrow Z \cup \rho^{ZB}_{i+1}$. Note that $X \notin {\mathbf S}$ and $\rho^{AX}_{i+1}$ is ${\mathbf S}$-active in $G_{i+1}$ because, otherwise, $\rho^{AB}_{i+1}$ would not be ${\mathbf S}$-active in $G_{i+1}$. Then, there is an ${\mathbf S}$-active route $\rho^{AX}_{\alpha}$ between $A$ and $X$ in $G_{\alpha}$ by the induction hypothesis. Moreover, $Y \in {\mathbf S}$ because, otherwise, $\rho^{AX}_{i+1} \cup X \rightarrow Y \rightarrow Z \cup \rho^{ZB}_{i+1}$ would be an ${\mathbf S}$-active route between $A$ and $B$ in $G_{i+1}$ that would violate the minimality property of $\rho^{AB}_{i+1}$. Note that $Y \leftarrow Z$ is in $G_{\alpha}$ because (i) $Y$ and $Z$ are adjacent in $G_{\alpha}$ since $I(G_{\alpha}) \subseteq I(G_i)$, and (ii) $Z$ is to the left of $Y$ in $\alpha$ (recall line 4). Note also that $X \rightarrow Y$ is in $G_{\alpha}$. To see it, note that $X$ and $Y$ are adjacent in $G_{\alpha}$ since $I(G_{\alpha}) \subseteq I(G_i)$. Recall that Method B2 percolates left-to-right in $\beta$ one by one all the nodes in the reverse order in which they appear in $\alpha$. Method B2 is currently percolating $Y$ and, thus, the nodes to the right of $Y$ in $\alpha$ are to right of $Y$ in $\beta$ too. If $X \leftarrow Y$ were in $G_{\alpha}$ then $X$ would be to the right of $Y$ in $\alpha$ and, thus, $X$ would be to the right of $Y$ in $\beta$. However, this would contradict the fact that $X$ is to the left of $Y$ in $\beta$, which follows from the fact that $G_i$ is consistent with $\beta$. Thus, $X \rightarrow Y$ is in $G_{\alpha}$. We now consider two cases.

\begin{description}

\item[Case 1.1] Assume that $Z \notin {\mathbf S}$. Then, $\rho^{ZB}_{i+1}$ is ${\mathbf S}$-active in $G_{i+1}$ because, otherwise, $\rho^{AB}_{i+1}$ would not be ${\mathbf S}$-active in $G_{i+1}$. Then, there is an ${\mathbf S}$-active route $\rho^{ZB}_{\alpha}$ between $Z$ and $B$ in $G_{\alpha}$ by the induction hypothesis. Then, $\rho^{AX}_{\alpha} \cup X \rightarrow Y \leftarrow Z \cup \rho^{ZB}_{\alpha}$ is an ${\mathbf S}$-active route between $A$ and $B$ in $G_{\alpha}$.

\item[Case 1.2] Assume that $Z \in {\mathbf S}$. Then, $\rho^{ZB}_{i+1} = Z \leftarrow W \cup \rho^{WB}_{i+1}$.\footnote{Note that maybe $W=B$. Note also that $W \neq X$ because, otherwise, $\rho^{AX}_{i+1} \cup X \rightarrow Y \leftarrow X \cup \rho^{WB}_{i+1}$ would be an ${\mathbf S}$-active route between $A$ and $B$ in $G_{i+1}$ that would violate the minimality property of $\rho^{AB}_{i+1}$.} Note that $W \notin {\mathbf S}$ and $\rho^{WB}_{i+1}$ is ${\mathbf S}$-active in $G_{i+1}$ because, otherwise, $\rho^{AB}_{i+1}$ would not be ${\mathbf S}$-active in $G_{i+1}$. Then, there is an ${\mathbf S}$-active route $\rho^{WB}_{\alpha}$ between $W$ and $B$ in $G_{\alpha}$ by the induction hypothesis. Note that $W$ and $Z$ are adjacent in $G_{\alpha}$ since $I(G_{\alpha}) \subseteq I(G_i)$. This and the fact proven above that $Y \leftarrow Z$ is in $G_{\alpha}$ imply that $Y$ and $W$ are adjacent in $G_{\alpha}$ because, otherwise, $Y \nci_{G_i} W | {\mathbf U}$ but $Y \ci_{G_{\alpha}} W | {\mathbf U}$ for some ${\mathbf U} \subseteq {\mathbf V}$ such that $Z \in {\mathbf U}$, which would contradict that $I(G_{\alpha}) \subseteq I(G_i)$. In fact, $Y \leftarrow W$ is in $G_{\alpha}$. To see it, recall that the nodes to the right of $Y$ in $\alpha$ are to right of $Y$ in $\beta$ too. If $Y \rightarrow W$ were in $G_{\alpha}$ then $W$ would be to the right of $Y$ in $\alpha$ and, thus, $W$ would be to the right of $Y$ in $\beta$ too. However, this would contradict the fact that $W$ is to the left of $Y$ in $\beta$, which follows from the fact that $W$ is to the left of $Z$ in $\beta$ because $G_i$ is consistent with $\beta$, and the fact that $Y$ is the left neighbor of $Z$ in $\beta$ (recall line 3). Thus, $Y \leftarrow W$ is in $G_{\alpha}$. Then, $\rho^{AX}_{\alpha} \cup X \rightarrow Y \leftarrow W \cup \rho^{WB}_{\alpha}$ is an ${\mathbf S}$-active route between $A$ and $B$ in $G_{\alpha}$. 

\end{description}

\item[Case 2] Method B2 constructs $G_{i+1}$ from $G_i$ by adding the arc $X \rightarrow Y$ for some $X \in Pa_{G_i}(Z) \setminus Pa_{G_i}(Y)$. Note that $X \rightarrow Y$ occurs in $\rho^{AB}_{i+1}$.\footnote{Note that maybe $A=X$ and/or $B=Y$.} Let $\rho^{AB}_{i+1} = \rho^{AX}_{i+1} \cup X \rightarrow Y \cup \rho^{YB}_{i+1}$. Note that $X \notin {\mathbf S}$ and $\rho^{AX}_{i+1}$ is ${\mathbf S}$-active in $G_{i+1}$ because, otherwise, $\rho^{AB}_{i+1}$ would not be ${\mathbf S}$-active in $G_{i+1}$. Then, there is an ${\mathbf S}$-active route $\rho^{AX}_{\alpha}$ between $A$ and $X$ in $G_{\alpha}$ by the induction hypothesis. Note that $Y \leftarrow Z$ is in $G_{\alpha}$ because (i) $Y$ and $Z$ are adjacent in $G_{\alpha}$ since $I(G_{\alpha}) \subseteq I(G_i)$, and (ii) $Z$ is to the left of $Y$ in $\alpha$ (recall line 4). Note also that $X$ and $Z$ are adjacent in $G_{\alpha}$ since $I(G_{\alpha}) \subseteq I(G_i)$. This and the fact that $Y \leftarrow Z$ is in $G_{\alpha}$ imply that $X$ and $Y$ are adjacent in $G_{\alpha}$ because, otherwise, $X \nci_{G_i} Y | {\mathbf U}$ but $X \ci_{G_{\alpha}} Y | {\mathbf U}$ for some ${\mathbf U} \subseteq {\mathbf V}$ such that $Z \in {\mathbf U}$, which would contradict that $I(G_{\alpha}) \subseteq I(G_i)$. In fact, $X \rightarrow Y$ is in $G_{\alpha}$. To see it, recall that Method B2 percolates left-to-right in $\beta$ one by one all the nodes in the reverse order in which they appear in $\alpha$. Method B2 is currently percolating $Y$ and, thus, the nodes to the right of $Y$ in $\alpha$ are to right of $Y$ in $\beta$ too. If $X \leftarrow Y$ were in $G_{\alpha}$ then $X$ would be to the right of $Y$ in $\alpha$ and, thus, $X$ would be to the right of $Y$ in $\beta$ too. However, this would contradict the fact that $X$ is to the left of $Y$ in $\beta$, which follows from the fact that $X$ is to the left of $Z$ in $\beta$ because $G_i$ is consistent with $\beta$, and the fact that $Y$ is the left neighbor of $Z$ in $\beta$ (recall line 3). Thus, $X \rightarrow Y$ is in $G_{\alpha}$. We now consider three cases.

\begin{description}

\item[Case 2.1] Assume that $Y \in {\mathbf S}$ and $\rho^{YB}_{i+1} = Y \leftarrow X \cup \rho^{XB}_{i+1}$. Note that $\rho^{XB}_{i+1}$ is ${\mathbf S}$-active in $G_{i+1}$ because, otherwise, $\rho^{AB}_{i+1}$ would not be ${\mathbf S}$-active in $G_{i+1}$. Then, there is an ${\mathbf S}$-active route $\rho^{XB}_{\alpha}$ between $X$ and $B$ in $G_{\alpha}$ by the induction hypothesis. Then, $\rho^{AX}_{\alpha} \cup X \rightarrow Y \leftarrow X \cup \rho^{XB}_{\alpha}$ is an ${\mathbf S}$-active route between $A$ and $B$ in $G_{\alpha}$.

\item[Case 2.2] Assume that $Y \in {\mathbf S}$ and $\rho^{YB}_{i+1} = Y \leftarrow W \cup \rho^{WB}_{i+1}$.\footnote{Note that maybe $W=B$. Note also that $W \neq X$, because the case where $W=X$ is covered by Case 2.1.} Note that $W \notin {\mathbf S}$ and $\rho^{WB}_{i+1}$ is ${\mathbf S}$-active in $G_{i+1}$ because, otherwise, $\rho^{AB}_{i+1}$ would not be ${\mathbf S}$-active in $G_{i+1}$. Then, there is an ${\mathbf S}$-active route $\rho^{WB}_{\alpha}$ between $W$ and $B$ in $G_{\alpha}$ by the induction hypothesis. Note also that $Y \leftarrow W$ is in $G_{\alpha}$. To see it, note that $Y$ and $W$ are adjacent in $G_{\alpha}$ since $I(G_{\alpha}) \subseteq I(G_i)$. Recall that the nodes to the right of $Y$ in $\alpha$ are to right of $Y$ in $\beta$ too. If $Y \rightarrow W$ were in $G_{\alpha}$ then $W$ would be to the right of $Y$ in $\alpha$ and, thus, $W$ would be to the right of $Y$ in $\beta$ too. However, this would contradict the fact that $W$ is to the left of $Y$ in $\beta$, which follows from the fact that $G_i$ is consistent with $\beta$. Thus, $Y \leftarrow W$ is in $G_{\alpha}$. Then, $\rho^{AX}_{\alpha} \cup X \rightarrow Y \leftarrow W \cup \rho^{WB}_{\alpha}$ is an ${\mathbf S}$-active route between $A$ and $B$ in $G_{\alpha}$.

\item[Case 2.3] Assume that $Y \notin {\mathbf S}$. The proof of this case is based on that of step 8 in \citep[Lemma 30]{Chickering2002}. Let $D$ denote the node that is maximal in $G_{\alpha}$ from the set of descendants of $Y$ in $G_i$. Note that $D$ is guaranteed to be unique by \citep[Lemma 29]{Chickering2002}, because $I(G_{\alpha}) \subseteq I(G_i)$. Note also that $D \neq Y$, because $Z$ is a descendant of $Y$ in $G_i$ and, as shown above, $Y \leftarrow Z$ is in $G_{\alpha}$. We now show that $D$ is a descendant of $Z$ in $G_i$. We consider three cases.

\begin{description}

\item[Case 2.3.1] Assume that $D=Z$. Then, $D$ is a descendant of $Z$ in $G_i$.

\item[Case 2.3.2] Assume that $D \neq Z$ and $D$ was a descendant of $Z$ in $G_0$. Recall that Method B2 percolates left-to-right in $\beta$ one by one all the nodes in the reverse order in which they appear in $\alpha$. Method B2 is currently percolating $Y$ and, thus, it has not yet percolated $Z$ because $Z$ is to the left of $Y$ in $\alpha$ (recall line 4). Therefore, none of the descendants of $Z$ in $G_0$ (among which is $D$) is to the left of $Z$ in $\beta$. This and the fact that $\beta$ is consistent with $G_i$ imply that $Z$ is a node that is maximal in $G_i$ from the set of descendants of $Z$ in $G_0$. Actually, $Z$ is the only such node by \citep[Lemma 29]{Chickering2002}, because $I(G_i) \subseteq I(G_0)$. Then, the descendants of $Z$ in $G_0$ are descendant of $Z$ in $G_i$ too. Thus, $D$ is a descendant of $Z$ in $G_i$.

\item[Case 2.3.3] Assume that $D \neq Z$ and $D$ was not a descendant of $Z$ in $G_0$. As shown in Case 2.3.2, the descendants of $Z$ in $G_0$ are descendant of $Z$ in $G_i$ too. Therefore, none of the descendants of $Z$ in $G_0$ was to the left of $D$ in $\alpha$ because, otherwise, some descendant of $Z$ and thus of $Y$ in $G_i$ would be to the left of $D$ in $\alpha$, which would contradict the definition of $D$. This and the fact that $D$ was not a descendant of $Z$ in $G_0$ imply that $D$ was still in $G'$ when $Z$ became a sink node of $G'$ in Construct $\beta$ (recall Figure \ref{fig:methodsab}). Therefore, Construct $\beta$ added $D$ to $\beta$ after having added $Z$ (recall lines 3-4), because $D$ is to the left of $Z$ in $\alpha$ by definition of $D$.\footnote{Note that this statement is true thanks to our correction of Construct $\beta$.} For the same reason, Construct $\beta$ did not interchange $D$ and $Z$ in $\beta$ afterwards (recall line 6). For the same reason, Method B2 has not interchanged $D$ and $Z$ in $\beta$ (recall line 4). Thus, $D$ is currently still to the left of $Z$ in $\beta$, which implies that $D$ is to the left of $Y$ in $\beta$, because $Y$ is the left neighbor of $Z$ in $\beta$ (recall line 3). However, this contradicts the fact that $G_i$ is consistent with $\beta$, because $D$ is a descendant of $Y$ in $G_i$. Thus, this case never occurs.

\end{description}

We continue with the proof of Case 2.3. Note that $Y \notin {\mathbf S}$ implies that $\rho^{YB}_{i+1}$ is ${\mathbf S}$-active in $G_{i+1}$ because, otherwise, $\rho^{AB}_{i+1}$ would not be ${\mathbf S}$-active in $G_{i+1}$. Note also that no descendant of $Z$ in $G_i$ is in ${\mathbf S}$ because, otherwise, there would be an ${\mathbf S}$-active route $\rho^{XY}_i$ between $X$ and $Y$ in $G_i$ and, thus, $\rho^{AX}_{i+1} \cup \rho^{XY}_i \cup \rho^{YB}_{i+1}$ would be an ${\mathbf S}$-active route between $A$ and $B$ in $G_{i+1}$ that would violate the minimality property of $\rho^{AB}_{i+1}$. This implies that $D \notin {\mathbf S}$ because, as shown above, $D$ is a descendant of $Z$ in $G_i$. It also implies that there is an ${\mathbf S}$-active descending route $\rho^{ZD}_i$ from $Z$ to $D$ in $G_i$. Then, $\rho^{AX}_{i+1} \cup X \rightarrow Z \cup \rho^{ZD}_i$ is an ${\mathbf S}$-active route between $A$ and $D$ in $G_{i+1}$. Likewise, $\rho^{BY}_{i+1} \cup Y \rightarrow Z \cup \rho^{ZD}_i$ is an ${\mathbf S}$-active route between $B$ and $D$ in $G_{i+1}$, where $\rho^{BY}_{i+1}$ denotes the route resulting from reversing $\rho^{YB}_{i+1}$. Therefore, there are ${\mathbf S}$-active routes $\rho^{AD}_{\alpha}$ and $\rho^{BD}_{\alpha}$ between $A$ and $D$ and between $B$ and $D$ in $G_{\alpha}$ by the induction hypothesis. 

Consider the subroute of $\rho^{AB}_{i+1}$ that starts with the arc $X \rightarrow Y$ and continues in the direction of this arc until it reaches a node $E$ such that $E=B$ or $E \in {\mathbf S}$. Note that $E$ is a descendant of $Y$ in $G_i$ and, thus, $E$ is a descendant of $D$ in $G_{\alpha}$ by definition of $D$. Let $\rho^{DE}_{\alpha}$ denote the descending route from $D$ to $E$ in $G_{\alpha}$. Assume without loss of generality that $G_{\alpha}$ has no descending route from $D$ to $B$ or to a node in ${\mathbf S}$ that is shorter than $\rho^{DE}_{\alpha}$. This implies that if $E=B$ then $\rho^{DE}_{\alpha}$ is ${\mathbf S}$-active in $G_{\alpha}$ because, as shown above, $D \notin {\mathbf S}$. Thus, $\rho^{AD}_{\alpha} \cup \rho^{DE}_{\alpha}$ is an ${\mathbf S}$-active route between $A$ and $B$ in $G_{\alpha}$. On the other hand, if $E \in {\mathbf S}$ then $E \neq D$ because $D \notin {\mathbf S}$. Thus, $\rho^{AD}_{\alpha} \cup \rho^{DE}_{\alpha} \cup \rho^{ED}_{\alpha} \cup \rho^{DB}_{\alpha}$ is an ${\mathbf S}$-active route between $A$ and $B$ in $G_{\alpha}$, where $\rho^{ED}_{\alpha}$ and $\rho^{DB}_{\alpha}$ denote the routes resulting from reversing $\rho^{DE}_{\alpha}$ and $\rho^{BD}_{\alpha}$.

\end{description}

\end{description}

\end{proof}

\begin{figure}[t]
\centering
\small
\begin{tabular}{rl}
\hline
\\
& \underline{Method G2H($G$, $H$)}\\
\\
& /* Given two DAGs $G$ and $H$ such that $I(H) \subseteq I(G)$, the algorithm transforms\\ 
& $G$ into $H$ by a sequence of arc additions and covered arc reversals such that\\
& after each operation in the sequence $G$ is a DAG and $I(H) \subseteq I(G)$ */\\
\\
1 & Let $\alpha$ denote a node ordering that is consistent with $H$\\
2 & $G$=Method B2($G$, $\alpha$)\\
3 & Add to $G$ the arcs that are in $H$ but not in $G$\\
\\
\hline
\\
\end{tabular}
\caption{Method G2H.}\label{fig:methodg2h}
\end{figure}

Finally, we show how the correctness of Method B2 leads to an alternative proof of the so-called Meek's conjecture \citep{Meek1997}. Given two DAGs $G$ and $H$ such that $I(H) \subseteq I(G)$, Meek's conjecture states that we can transform $G$ into $H$ by a sequence of arc additions and covered arc reversals such that after each operation in the sequence $G$ is a DAG and $I(H) \subseteq I(G)$. The importance of Meek's conjecture lies in that it allows to develop efficient and asymptotically correct algorithms for learning BNs from data under mild assumptions \citep{Chickering2002,ChickeringandMeek2002,Meek1997,Nielsenetal.2003}. Meek's conjecture was proven to be true in \citep[Theorem 4]{Chickering2002} by developing an algorithm that constructs a valid sequence of arc additions and covered arc reversals. We propose an alternative algorithm to construct such a sequence. The pseudocode of our algorithm, called Method G2H, can be seen in Figure \ref{fig:methodg2h}. The following corollary proves that Method G2H is correct.

\begin{cor}
Given two DAGs $G$ and $H$ such that $I(H) \subseteq I(G)$, Method G2H($G$, $H$) transforms $G$ into $H$ by a sequence of arc additions and covered arc reversals such that after each operation in the sequence $G$ is a DAG and $I(H) \subseteq I(G)$.
\end{cor}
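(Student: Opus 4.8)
The plan is to analyze the two non-trivial lines of Method G2H separately and then concatenate the operation sequences they perform. Line~2 replaces $G$ by the output of Method B2($G$, $\alpha$), which the proof of Theorem~\ref{the:correctness} shows to be $G_{\alpha}$, the MDI map of the original $G$ relative to $\alpha$. Line~3 then adds to $G_{\alpha}$ the arcs of $H$ that are missing. The structural fact I would establish first, and on which everything else rests, is that $G_{\alpha}$ is a subgraph of $H$. Since $\alpha$ is consistent with $H$, we have $Pa_H(A) \subseteq Pre_{\alpha}(A)$ and $A \ci_H Pre_{\alpha}(A) \setminus Pa_H(A) \mid Pa_H(A)$ for every node $A$; as $I(H) \subseteq I(G)$, this yields $A \ci_G Pre_{\alpha}(A) \setminus Pa_H(A) \mid Pa_H(A)$, so $Pa_H(A)$ is one of the subsets of $Pre_{\alpha}(A)$ competing to be $Pa_{G_{\alpha}}(A)$ in the characterization of $G_{\alpha}$ recalled in Section~\ref{sec:preliminaries}. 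Because $I(G)$ is a graphoid, $Pa_{G_{\alpha}}(A)$ is the unique smallest such subset and is therefore contained in $Pa_H(A)$; hence every arc of $G_{\alpha}$ is an arc of $H$, and consequently $I(H) \subseteq I(G_{\alpha})$.

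With that in hand I would handle line~2. Writing $G_0 = G, G_1, \ldots, G_n = G_{\alpha}$ for the successive DAGs produced by Method B2, the proof of Theorem~\ref{the:correctness} already shows that each $G_i$ is a DAG (it is consistent with $\beta$) and that $I(G_{i+1}) \subseteq I(G_i)$, so $I(G_n) \subseteq I(G_i)$ for all $i$. Combining this with $I(H) \subseteq I(G_{\alpha}) = I(G_n)$ gives $I(H) \subseteq I(G_i)$ after every operation of line~2. Moreover, by construction every such operation is an arc addition or a covered arc reversal. So line~2 already produces a valid prefix of the required sequence, ending at $G = G_{\alpha}$.

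For line~3 I would add the missing arcs of $H$ one at a time, in any order. Each added arc is an arc of $H$ (it exists precisely because $G_{\alpha}$ is a subgraph of $H$), so at every intermediate stage $G$ remains a subgraph of $H$; being a subgraph of the DAG $H$ it is itself a DAG, and having no more arcs than $H$ it satisfies $I(H) \subseteq I(G)$. After all missing arcs have been added, $G = H$. Concatenating the sequence from line~2 with this one gives the claimed transformation.

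The main obstacle, and the step I expect to need the most care, is the first one: showing that the graphoid/intersection argument forces the actual containment $Pa_{G_{\alpha}}(A) \subseteq Pa_H(A)$, not merely $|Pa_{G_{\alpha}}(A)| \le |Pa_H(A)|$, and hence that $G_{\alpha}$ is a subgraph of $H$. Once that containment is pinned down, the rest is bookkeeping over the operation sequence; in particular the invariant ``$I(H) \subseteq I(G)$ is preserved throughout line~2'' comes for free from the monotonicity $I(G_{i+1}) \subseteq I(G_i)$ already noted in the proof of Theorem~\ref{the:correctness} together with $I(H) \subseteq I(G_n)$, and the same invariant for line~3 is immediate since $G$ stays a subgraph of $H$.
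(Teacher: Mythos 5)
Your proof is correct and follows essentially the same route as the paper's: both reduce the corollary to the key claim that $G_{\alpha}$ is a subgraph of $H$, then obtain the line-2 invariant from the proof of Theorem \ref{the:correctness} together with $I(H) \subseteq I(G_{\alpha})$, and the line-3 invariant from the fact that every intermediate graph stays a subgraph of $H$. The only point of divergence is how the subgraph claim is established: the paper notes that deleting arcs from $H$ yields an MDI map of $I(G)$ relative to $\alpha$, which by uniqueness must be $G_{\alpha}$, whereas you argue via the minimal-parent-set characterization that $Pa_{G_{\alpha}}(A) \subseteq Pa_H(A)$ for every $A$ --- a valid alternative, since the intersection property makes the family of separating predecessor subsets closed under intersection, so the unique smallest one is contained in every other (resolving the containment-versus-cardinality worry you flag).
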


\begin{proof}

Note from Method G2H's line 1 that $\alpha$ denotes a node ordering that is consistent with $H$. Let $G_{\alpha}$ denote the MDI map of $G$ relative to $\alpha$. Recall that $G_{\alpha}$ is guaranteed to be unique because $I(G)$ is a graphoid. Note that $I(H) \subseteq I(G)$ implies that $G_{\alpha}$ is a subgraph of $H$. To see it, note that $I(H) \subseteq I(G)$ implies that we can obtain a MDI map of $G$ relative to $\alpha$ by just removing arcs from $H$. However, $G_{\alpha}$ is the only MDI map of $G$ relative to $\alpha$.

Then, it follows from the proof of Theorem \ref{the:correctness} that Method G2H's line 2 transforms $G$ into $G_{\alpha}$ by a sequence of arc additions and covered arc reversals, and that after each operation in the sequence $G$ is a DAG and $I(G_{\alpha}) \subseteq I(G)$. Thus, after each operation in the sequence $I(H) \subseteq I(G)$ because $I(H) \subseteq I(G_{\alpha})$ since, as shown above, $G_{\alpha}$ is a subgraph of $H$. Moreover, Method G2H's line 3 transforms $G$ from $G_{\alpha}$ to $H$ by a sequence of arc additions. Of course, after each arc addition $G$ is a DAG and $I(H) \subseteq I(G)$ because $G_{\alpha}$ is a subgraph of $H$.

\end{proof}

\section{The Corrected Methods A and B are Efficient}\label{sec:efficiency}

In this section, we show that Methods A and B are more efficient than any other solution to the same problem we can think of. Let $n$ and $a$ denote, respectively, the number of nodes and arcs in $G$. Moreover, let us assume hereinafter that a DAG is implemented as an adjacency matrix, whereas a node ordering is implemented as an array with an entry per node indicating the position of the node in the ordering. Since $I(G)$ is a graphoid, the first solution we can think of consists in applying the following characterization of $G_{\alpha}$: For each node $A$, $Pa_{G_{\alpha}}(A)$ is the smallest subset ${\mathbf X} \subseteq Pre_{\alpha}(A)$ such that $A \ci_G Pre_{\alpha}(A) \setminus {\mathbf X} | {\mathbf X}$. This solution implies evaluating for each node $A$ all the $O(2^n)$ subsets of $Pre_{\alpha}(A)$. Evaluating a subset implies checking a separation statement in $G$, which takes $O(a)$ time \citep[p. 530]{Geigeretal.1990}. Therefore, the overall runtime of this solution is $O(a n 2^n)$.

Since $I(G)$ satisfies the composition property in addition to the graphoid properties, a more efficient solution consists in running the incremental association Markov boundary (IAMB) algorithm \citep[Theorem 8]{Pennaetal.2007} for each node $A$ to find $Pa_{G_{\alpha}}(A)$. The IAMB algorithm first sets $Pa_{G_{\alpha}}(A)=\emptyset$ and, then, proceeds with the following two steps. The first step consists in iterating through the following line until $Pa_{G_{\alpha}}(A)$ does not change: Take any node $B \in Pre_{\alpha}(A) \setminus Pa_{G_{\alpha}}(A)$ such that $A \nci_G B | Pa_{G_{\alpha}}(A)$ and add it to $Pa_{G_{\alpha}}(A)$. The second step consists in iterating through the following line until $Pa_{G_{\alpha}}(A)$ does not change: Take any node $B \in Pa_{G_{\alpha}}(A)$ that has not been considered before and such that $A \ci_G B | Pa_{G_{\alpha}}(A) \setminus \{B\}$, and remove it from $Pa_{G_{\alpha}}(A)$. The first step of the IAMB algorithm can add $O(n)$ nodes to $Pa_{G_{\alpha}}(A)$. Each addition implies evaluating $O(n)$ candidates for the addition, since $Pre_{\alpha}(A)$ has $O(n)$ nodes. Evaluating a candidate implies checking a separation statement in $G$, which takes $O(a)$ time \citep[p. 530]{Geigeretal.1990}. Then, the first step of the IAMB algorithm runs in $O(a n^2)$ time. Similarly, the second step of the IAMB algorithm runs in $O(a n)$ time. Therefore, the IAMB algorithm runs in $O(a n^2)$ time. Since the IAMB algorithm has to be run once for each of the $n$ nodes, the overall runtime of this solution is $O(a n^3)$.

We now analyze the efficiency of Methods A and B. To be more exact, we analyze Methods A2 and B2 (recall Figure \ref{fig:methodsa2b2b3}) rather than the original Methods A and B (recall Figure \ref{fig:methodsab}), because the former are more efficient than the latter. Methods A2 and B2 run in $O(n^3)$ time. First, note that Construct $\beta$ runs in $O(n^3)$ time. The algorithm iterates $n$ times through lines 3-10 and, in each of these iterations, it iterates $O(n)$ times through lines 5-8. Moreover, line 3 takes $O(n^2)$ time, line 6 takes $O(1)$ time, and line 9 takes $O(n)$ time. Now, note that Methods A2 and B2 iterate $n$ times through lines 2-8 and, in each of these iterations, they iterate $O(n)$ times through lines 3-7. Moreover, line 4 takes $O(1)$ time, and line 5 takes $O(n)$ time because covering an arc implies updating the adjacency matrix accordingly. Consequently, Methods A and B are more efficient than any other solution to the same problem we can think of. 

Finally, we analyze the complexity of Method G2H. Method G2H runs in $O(n^3)$ time: $\alpha$ can be constructed in $O(n^3)$ time by calling Construct $\beta$($H$, $\gamma$) where $\gamma$ is any node ordering, running Method B2 takes $O(n^3)$ time, and adding to $G$ the arcs that are in $H$ but not in $G$ can be done in $O(n^2)$ time. Recall that Method G2H is an alternative to the algorithm in \citep{Chickering2002}. Unfortunately, no implementation details are provided in \citep{Chickering2002} and, thus, a comparison with the runtime of the algorithm there is not possible. However, we believe that our algorithm is more efficient.

\section{Discussion}\label{sec:discussion}

In this paper, we have studied the problem of combining several given DAGs into a consensus DAG that only represents independences all the given DAGs agree upon and that has as few parameters associated as possible. Although our definition of consensus DAG is reasonable, we would like to leave out the number of parameters associated and focus solely on the independencies represented by the consensus DAG. In other words, we would like to define the consensus DAG as the DAG that only represents independences all the given DAGs agree upon and as many of them as possible. We are currently investigating whether both definitions are equivalent. In this paper, we have proven that there may exist several non-equivalent consensus DAGs. In principle, any of them is equally good. If we were able to conclude that one represents more independencies than the rest, then we would prefer that one. In this paper, we have proven that finding a consensus DAG is NP-hard. This made us resort to heuristics to find an approximated consensus DAG. This does not mean that we discard the existence of fast super-polynomial algorithms for the general case, or polynomial algorithms for constrained cases such as when the given DAGs have bounded in-degree. This is a question that we are currently investigating. In this paper, we have considered the heuristic originally proposed by \citet{MatzkevichandAbramson1992,MatzkevichandAbramson1993a,MatzkevichandAbramson1993b}. This heuristic takes as input a node ordering, and we have shown that finding the best node ordering for the heuristic is NP-hard. We are currently investigating the application of meta-heuristics in the space of node orderings to find a good node ordering for the heuristic. Our preliminary experiments indicate that this approach is highly beneficial, and that the best node ordering almost never coincides with any of the node orderings that are consistent with some of the given DAGs.

\section*{Acknowledgments}

We thank the anonymous referees for their thorough review of this manuscript. We thank Dr. Jens D. Nielsen for valuable comments and for pointing out a mistake in one of the proofs in an earlier version of this manuscript. We thank Dag Sonntag for proof-reading this manuscript. This work is funded by the Center for Industrial Information Technology (CENIIT) and a so-called career contract at Link\"oping University.

\section*{Appendix: Proofs of Lemmas \ref{lem:a=a2}-\ref{lem:b=b2}}

\setcounter{lem}{0}

\begin{lem}
Method A($G$, $\alpha$) and Method A2($G$, $\alpha$) return the same DAG.
\end{lem}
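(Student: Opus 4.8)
The plan is to show that Method A and Method A2 perform exactly the same sequence of arc coverings-and-reversals on $G$, and hence return the same DAG. The key observation is that both methods start from the same $\beta$ (the output of Construct $\beta$, which does not depend on which method calls it), and that the only operations that modify $G$ are the covered-arc reversals in line 5. So it suffices to argue that the multiset of arcs covered-and-reversed, \emph{in the order performed}, is identical for the two methods. I would phrase this as an invariant maintained across the outer loops: after each method has finished processing a given "block" of interchanges, the current $\beta$ and the current $G$ agree.

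First I would make precise what Method A does. Method A's line 2 picks the leftmost node $Y$ in $\beta$ whose left neighbor is to its right in $\alpha$; then lines 3--7 repeatedly swap $Y$ leftward (covering and reversing the arc $Z \rightarrow Y$ when present) as long as its current left neighbor $Z$ sits to the right of $Y$ in $\alpha$. This is precisely one "percolation of $Y$ right-to-left in $\beta$" in the terminology introduced before the lemma. So Method A percolates, one after another, certain chosen nodes right-to-left, each chosen as the current leftmost node needing to move left. Method A2, by contrast, simply walks through the nodes of $\beta$ from left to right (line 2: "leftmost node not considered before") and percolates each one right-to-left. The claim is that these two schedules produce the same sequence of swaps.

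The heart of the argument is a commutation/bubble-sort lemma: $\beta$ is being sorted into $\alpha$ by adjacent transpositions, and the final arrangement and the sequence of adjacent transpositions depend only on which adjacent pairs are "out of order" (i.e., in the wrong relative $\alpha$-order), not on the order in which we choose to fix them — as long as we only ever swap a pair that is currently out of order and we never swap a pair that is in order. Concretely, I would argue: (i) in Method A2, percolating a node $Y$ that is already correctly placed relative to everything to its left does nothing, so the only $Y$'s that actually trigger swaps in A2 are exactly those that Method A would also select, and in the same left-to-right order; (ii) once we have percolated the first $j$ nodes of $\beta$ (in A2) we have sorted a prefix, and this matches the state of $\beta$ in Method A after it has "finished" with everything to the left of position $j+1$; (iii) since each swap of a pair $Z,Y$ is accompanied by covering-and-reversing $Z\rightarrow Y$ exactly when that arc is present in the \emph{current} $G$, and the current $G$ is the same in both runs by induction, the arc operations coincide. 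I would formalize (i)--(iii) as a single induction on the outer-loop iterations, the induction hypothesis being "after the $k$-th node-selection, Method A and Method A2 have the same $\beta$ and the same $G$." The one subtlety worth dwelling on is that Method A's selection in line 2 ("leftmost node whose left neighbor is to its right in $\alpha$") can, after an interchange, cause the \emph{same} physical node to be reselected or cause a node to its left to become selectable; I would check that this never lets Method A perform a swap that A2 omits, nor vice versa, by noting that a swap is performed iff the two adjacent nodes are in the wrong $\alpha$-order, a condition symmetric between the two schedules.

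The main obstacle I expect is bookkeeping rather than conceptual depth: carefully matching the loop structures so that the induction hypothesis is exactly what is needed, and in particular handling the fact that in Method A a single outer iteration (one choice of $Y$ in line 2) percolates $Y$ possibly several positions left, whereas in Method A2 the outer iterations are indexed by position in $\beta$, so the correspondence between "outer iterations of A" and "outer iterations of A2" is many-to-one. I would resolve this by comparing states only at the synchronization points where both methods have a correctly-sorted prefix of the same length, and arguing that between consecutive synchronization points both methods execute the identical block of swaps (hence identical arc operations on $G$). Once that alignment is set up, everything else is routine.
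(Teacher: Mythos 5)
Your argument is correct and is essentially the paper's: both proofs show that Methods A and A2 perform the identical sequence of adjacent transpositions (and hence the identical coverings-and-reversals on the same intermediate $G$'s), the key fact being that the prefix of $\beta$ to the left of the first out-of-order pair is already $\alpha$-sorted, so the extra percolations Method A2 performs there are no-ops. The only difference is bookkeeping: the paper inducts on the number of executions of line 6, peeling off the first interchange and commuting it past the inductively rewritten tail, whereas you run a forward induction over synchronization points where both methods hold a sorted prefix of the same length; the substance is the same.
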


\begin{proof}

It is evident that Methods A and A2 transform $\beta$ into $\alpha$ and, thus, that they halt at some point. We now prove that they return the same DAG. We prove this result by induction on the number of times that Method A executes line 6 before halting. It is evident that the result holds if the number of executions is one, because Methods A and A2 share line 1. Assume as induction hypothesis that the result holds for up to $k-1$ executions. We now prove it for $k$ executions. Let $Y$ and $Z$ denote the nodes involved in the first of the $k$ executions. Since the induction hypothesis applies for the remaining $k-1$ executions, the run of Method A can be summarized as
\begin{center}
\begin{tabular}{l}
If $Z \rightarrow Y$ is in $G$ then cover and reverse $Z \rightarrow Y$ in $G$\\
Interchange $Y$ and $Z$ in $\beta$\\
For $i=1$ to $n$ do\\
\hspace{0.2cm} Percolate right-to-left in $\beta$ the leftmost node in $\beta$ that has not been percolated before
\end{tabular}
\end{center}
where $n$ is the number of nodes in $G$. Now, assume that $Y$ is percolated when $i=j$. Note that the first $j-1$ percolations only involve nodes to the left of $Y$ in $\beta$. Thus, the run above is equivalent to
\begin{center}
\begin{tabular}{l}
For $i=1$ to $j-1$ do\\
\hspace{0.2cm} Percolate right-to-left in $\beta$ the leftmost node in $\beta$ that has not been percolated before\\
If $Z \rightarrow Y$ is in $G$ then cover and reverse $Z \rightarrow Y$ in $G$\\
Interchange $Y$ and $Z$ in $\beta$\\
Percolate $Y$ right-to-left in $\beta$\\
Percolate $Z$ right-to-left in $\beta$\\
For $i=j+2$ to $n$ do\\
\hspace{0.2cm} Percolate right-to-left in $\beta$ the leftmost node in $\beta$ that has not been percolated before.
\end{tabular}
\end{center}
Now, let ${\mathbf W}$ denote the nodes to the left of $Z$ in $\beta$ before the first of the $k$ executions of line 6. Note that the fact that $Y$ and $Z$ are the nodes involved in the first execution implies that the nodes in ${\mathbf W}$ are also to the left of $Z$ in $\alpha$. Note also that, when $Z$ is percolated in the latter run above, the nodes to the left of $Z$ in $\beta$ are exactly ${\mathbf W} \cup \{Y\}$. Since all the nodes in ${\mathbf W} \cup \{Y\}$ are also to the left of $Z$ in $\alpha$, the percolation of $Z$ in the latter run above does not perform any arc covering and reversal or node interchange. Thus, the latter run above is equivalent to
\begin{center}
\begin{tabular}{l}
For $i=1$ to $j-1$ do\\
\hspace{0.2cm} Percolate right-to-left in $\beta$ the leftmost node in $\beta$ that has not been percolated before\\
Percolate $Z$ right-to-left in $\beta$\\
Percolate $Y$ right-to-left in $\beta$\\
For $i=j+2$ to $n$ do\\
\hspace{0.2cm} Percolate right-to-left in $\beta$ the leftmost node in $\beta$ that has not been percolated before
\end{tabular}
\end{center}
which is exactly the run of Method A2. Consequently, Methods A and A2 return the same DAG.

\end{proof}

\begin{lem}
Method A2($G$, $\alpha$) and Method B($G$, $\alpha$) return the same DAG.
\end{lem}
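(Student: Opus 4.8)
The plan is to imitate the inductive reorganization used in the proof of Lemma~\ref{lem:a=a2}, but now comparing Method A2 with Method B directly. I would first isolate three facts. (i) Both algorithms transform $\beta$ into $\alpha$ by a sequence of interchanges of \emph{adjacent} nodes, and each interchange swaps a pair $L,R$ with $L$ to the left of $R$ in the current $\beta$ and $R$ to the left of $L$ in $\alpha$; such an interchange removes exactly one inversion of the current $\beta$ relative to $\alpha$ and creates none, and neither algorithm ever interchanges an already correctly ordered pair. Hence both perform exactly the set of interchanges indexed by the inversions of the initial $\beta$ relative to $\alpha$, each inversion handled once. (ii) Both keep the current DAG consistent with the current $\beta$ throughout (a straightforward induction, analogous to Cases~1--4 in the proof of Theorem~\ref{the:correctness}); thus whenever a pair $L,R$ is interchanged, the arc between them, if present, points from $L$ to $R$, so the step associated with that interchange --- ``if $L\rightarrow R$ is in $G$ then cover and reverse $L\rightarrow R$'' --- is literally the same step in A2 (line~5 with $Z=L$, $Y=R$) and in B (line~5 with $Y=L$, $Z=R$). (iii) Covering and reversing a present arc $L\rightarrow R$ reads only, and modifies only, the parent sets of $L$ and of $R$: it resets $Pa(R)$ to $(Pa(L)\cup Pa(R))\setminus\{L\}$ and $Pa(L)$ to that same set together with $\{R\}$, leaving every other parent set unchanged, and it is a no-op when $L\rightarrow R$ is absent.

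From fact (iii) I would deduce the key commutation property: two interchanges acting on disjoint node pairs (and on disjoint pairs of positions in $\beta$) may be performed in either order without changing the resulting DAG, since neither one alters whether the other's arc is present nor the parent sets the other reads or writes. By fact (i), A2 and B perform the same set of interchanges; so it suffices to show that B's schedule of these interchanges can be converted into A2's by a succession of transpositions of consecutive disjoint interchanges.

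I would prove this by induction on the number of interchanges. For the inductive step, let $p$ be the leftmost position at which $\beta$ descends relative to $\alpha$, so $\beta[1],\ldots,\beta[p]$ is $\alpha$-increasing and $\beta[p]>_\alpha\beta[p+1]$. Then B's selection rule picks $Y=\beta[p]$, $Z=\beta[p+1]$ immediately, while A2 --- which processes the nodes of $\beta$ from left to right, percolating each leftward into the already $\alpha$-sorted prefix, i.e.\ performs insertion sort --- idles over positions $1,\ldots,p$ and takes as its first interchange the first swap of $\beta[p+1]$, namely the swap of $\beta[p+1]$ with $\beta[p]$. So B's first interchange and A2's first interchange coincide; I would peel it off both schedules and apply the induction hypothesis to the residual runs on the updated DAG and ordering.

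The main obstacle I anticipate is making this peeling rigorous. After the first interchange, the residual of A2 is literally A2 applied to the updated instance (the rest of the same insertion-sort pass), but the residual of B need not be B applied to the updated instance, because B's next leftmost descent may lie to the left of $\beta[p+1]$'s new position instead of continuing to percolate $\beta[p]$ rightward. Handling this will require either strengthening the induction hypothesis so that it compares A2's schedule with an arbitrary leftmost-descent-bubbling schedule of the common interchange set, or inserting an auxiliary commutation argument showing that B's residual run and B-applied-to-the-updated-instance differ only by transpositions of consecutive disjoint interchanges; establishing the needed disjointness rests, as in the proof of Lemma~\ref{lem:a=a2}, on the prefix $\beta[1],\ldots,\beta[p]$ being $\alpha$-sorted and on $\beta$-consistency of the intermediate DAGs.
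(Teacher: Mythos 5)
Your plan is a genuinely different route from the paper's (the paper disposes of this lemma by observing that the proof of Lemma~\ref{lem:a=a2} becomes a proof of Lemma~\ref{lem:a2=b} once $Y$ and $Z$ are interchanged throughout, i.e.\ it exploits the notational symmetry between percolating right-to-left in Method A and percolating left-to-right in Method B), but it has a genuine gap exactly where you suspect one: step (v), the claim that B's schedule of interchanges can be turned into A2's by transpositions of consecutive \emph{disjoint} interchanges, is never established, and your inductive step does not close because, as you note, after peeling off the common first interchange the residual of B is not a run of B on the updated instance (the swap at the leftmost descent $(p,p+1)$ may create a new descent at $(p-1,p)$, which restarted-B would attack first while continued-B keeps bubbling $\beta[p]$ rightward). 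This is not a detail that can be deferred: two schedules realizing the same set of inversions but not related by disjoint commutations really can produce different DAGs. For example, take $G$ with arcs $U \rightarrow V \rightarrow W$, $\beta=(U,V,W)$ and $\alpha=(W,V,U)$. The schedule $\{U,V\},\{U,W\},\{V,W\}$ (which is what both A2 and B happen to do here) yields the chain $W \rightarrow V \rightarrow U$, i.e.\ $G_\alpha$; the schedule $\{V,W\},\{U,W\},\{U,V\}$, which uses the same three inversions in a different order not reachable by disjoint commutations (all three pairs pairwise intersect), yields the complete DAG on $\{U,V,W\}$. So the entire content of the lemma is concentrated in the unproven assertion that A2's insertion-sort schedule and B's leftmost-descent-bubbling schedule lie in the same commutation class, and your facts (i)--(iv), while all correct, do not touch it.

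To complete your argument you would have to prove that commutation-class statement outright (a nontrivial combinatorial claim about these two particular sorting procedures), or abandon the schedule-rewriting framing. The paper's intended argument avoids the issue: it proves Lemma~\ref{lem:a=a2} by induction on the number of executions of line 6 of Method A, peeling off the first interchange, invoking the induction hypothesis to replace the remainder of A's run by A2's run on the updated instance, and then commuting that single interchange past the first $j-1$ percolations (which involve only nodes to the left of the swapped pair) and absorbing it into the percolations of $Y$ and $Z$, one of which is shown to be a no-op because everything to its left in $\beta$ is already to its left in $\alpha$; Lemma~\ref{lem:a2=b} is then the same proof with the roles of $Y$ and $Z$ exchanged. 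If you want to salvage your approach, the most promising repair is the second option you mention --- an auxiliary argument that B's residual run and B restarted on the updated instance differ only by commutations of disjoint consecutive interchanges --- but that auxiliary claim is itself of the same nature and difficulty as the one you are trying to prove, so as it stands the proposal does not constitute a proof.
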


\begin{proof}

We can prove the lemma in much the same way as Lemma \ref{lem:a=a2}. We simply need to replace $Y$ by $Z$ and vice versa in the proof of Lemma \ref{lem:a=a2}.

\end{proof}

\begin{lem}
Method B($G$, $\alpha$) and Method B2($G$, $\alpha$) return the same DAG.
\end{lem}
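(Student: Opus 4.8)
The plan is to mimic the proof of Lemma~\ref{lem:a=a2}, since Methods~B and B2 differ from Methods~A and A2 in exactly the symmetric way: ``left'' and ``right'' are swapped in lines 2--4, the arc direction is reversed in line 5, and the node selected in line 2 of B2 is the rightmost unconsidered node in $\alpha$ rather than the leftmost unconsidered node in $\beta$. So the first thing I would do is set up the same induction: both algorithms share line~1 (they compute the same $\beta$), so it suffices to prove by induction on the number of times Method~B executes line~6 before halting that the two algorithms return the same DAG. The base case (one execution) is immediate. For the inductive step, let $Y$ and $Z$ be the nodes involved in the first of the $k$ executions of line~6 in Method~B, so $Y$ is the leftmost node in $\beta$ whose right neighbor $Z$ is to its left in $\alpha$; thus $Y \rightarrow Z$ (if present in $G$) is covered and reversed, and $Y,Z$ are interchanged in $\beta$.

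The core of the argument, exactly as in Lemma~\ref{lem:a=a2}, is a commutation/reordering observation: after the first interchange of $Y$ and $Z$, the remaining $k-1$ executions can be grouped (by the induction hypothesis) into a sequence of left-to-right percolations performed in the order the nodes appear in $\beta$, i.e.\ Method~B2 on the modified configuration. One then shows that this sequence of percolations can be reordered so that percolating $Y$ immediately precedes percolating $Z$, and that when $Z$ is percolated the set of nodes to its right in $\beta$ is precisely the set of nodes that are already to its right in $\alpha$ together with $Y$, so the percolation of $Z$ performs no covering/reversal and no interchange. Deleting that vacuous percolation and swapping the order of the $Y$ and $Z$ percolations yields exactly the run of Method~B2. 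Concretely, I would let ${\mathbf W}$ denote the nodes to the right of $Z$ in $\beta$ just before the first execution of line~6; the definition of $Y$ as the \emph{leftmost} witness forces every node of ${\mathbf W}$ to already lie to the right of $Z$ in $\alpha$, which is the key fact making the $Z$-percolation vacuous.

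The main obstacle I anticipate is getting the ``$Y$ is the leftmost witness'' bookkeeping exactly right under the left/right mirror, because in Method~B2 line~2 selects nodes by their position in $\alpha$ (rightmost unconsidered), not by their position in $\beta$, whereas in Method~A2 it was the other way around; so the dictionary is not a pure syntactic swap of $Y$ and $Z$ as it was between Lemmas~\ref{lem:a=a2} and~\ref{lem:a2=b}. I would need to check that percolating the nodes ``left-to-right in $\beta$ in the reverse order in which they appear in $\alpha$'' (the description of Method~B2 given before Lemma~\ref{lem:b=b2}) really does coincide, step for step, with the iterated left-neighbor comparisons that Method~B performs, and in particular that no node already placed correctly relative to $\alpha$ gets disturbed. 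Once that correspondence is pinned down, the reordering argument and the vacuity of the $Z$-percolation go through verbatim as in Lemma~\ref{lem:a=a2}. In fact, given how closely Method~B mirrors Method~A, I expect the cleanest writeup is simply: ``We can prove the lemma in much the same way as Lemma~\ref{lem:a=a2}, interchanging the roles of left and right throughout and replacing each percolation right-to-left by a percolation left-to-right.''
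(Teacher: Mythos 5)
Your overall skeleton---induction on the number of executions of line 6 of Method B, with the inductive step reduced to commuting the first cover/reverse/interchange of $Y$ and $Z$ past some of the subsequent percolations---matches the paper's. But the concrete commutation argument you propose is transplanted from Lemma~\ref{lem:a=a2} and does not survive the asymmetry you yourself flagged. Your ``key fact'' is that the set ${\mathbf W}$ of nodes to the right of $Z$ in $\beta$ all lie to the right of $Z$ in $\alpha$, because $Y$ is the \emph{leftmost} witness. This is false: the leftmost-witness property of $Y$ constrains only the prefix of $\beta$ to the left of $Y$ (every node there has its right neighbor to its right in $\alpha$); it says nothing about the suffix to the right of $Z$. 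The paper's own running example refutes it: with $\beta=(I,J,K,L,M)$ and $\alpha=(M,I,K,J,L)$, Method B first selects $Y=J$ and $Z=K$, and $M$ is to the right of $K$ in $\beta$ but to the left of $K$ in $\alpha$. Moreover, the whole ``reorder so that percolating $Y$ immediately precedes percolating $Z$, then delete the vacuous $Z$-percolation'' scaffolding is not what is needed here: Method B2 percolates nodes in reverse $\alpha$-order, and since $Z$ is to the left of $Y$ in $\alpha$, B2 already percolates $Y$ before $Z$ (possibly long before), exactly as Method B does. There is no adjacent $Z$-percolation to make vacuous.

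What the paper actually proves in the inductive step is different. Writing $Y$ as the $j$-th rightmost node of $\alpha$, the nodes $W_1,\dots,W_{j-1}$ that Method B2 percolates before $Y$ are precisely those to the right of $Y$ in $\alpha$; the leftmost-witness property of $Y$ (which makes the segment of $\beta$ up to $Y$ increasing with respect to $\alpha$) forces each such $W_i$ to lie to the right of $Y$, and hence strictly to the right of $Z$, in $\beta$ when it is percolated. Consequently those $j-1$ percolations involve only nodes to the right of $Z$ and commute with the initial operation on $Y$ and $Z$, after which that operation is absorbed as the first step of $Y$'s percolation in the reverse-$\alpha$ order. So the correspondence you said you ``would need to check'' is exactly the missing content of the proof, and checking it requires this argument about the prefix of $\beta$, not the mirrored suffix claim you wrote down.
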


\begin{proof}

It is evident that Methods B and B2 transform $\beta$ into $\alpha$ and, thus, that they halt at some point. We now prove that they return the same DAG. We prove this result by induction on the number of times that Method B executes line 6 before halting. It is evident that the result holds if the number of executions is one, because Methods B and B2 share line 1. Assume as induction hypothesis that the result holds for up to $k-1$ executions. We now prove it for $k$ executions. Let $Y$ and $Z$ denote the nodes involved in the first of the $k$ executions. Since the induction hypothesis applies for the remaining $k-1$ executions, the run of Method B can be summarized as
\begin{center}
\begin{tabular}{l}
If $Y \rightarrow Z$ is in $G$ then cover and reverse $Y \rightarrow Z$ in $G$\\
Interchange $Y$ and $Z$ in $\beta$\\
For $i=1$ to $n$ do\\
\hspace{0.2cm} Percolate left-to-right in $\beta$ the rightmost node in $\alpha$ that has not been percolated before
\end{tabular}
\end{center}
where $n$ is the number of nodes in $G$. Now, assume that $Y$ is the $j$-th rightmost node in $\alpha$. Note that, for all $1 \leq i < j$, the $i$-th rightmost node $W_i$ in $\alpha$ is to the right of $Y$ in $\beta$ when $W_i$ is percolated in the run above. To see it, assume to the contrary that $W_i$ is to the left of $Y$ in $\beta$. This implies that $W_i$ is also to the left of $Z$ in $\beta$, because $Y$ and $Z$ are neighbors in $\beta$. However, this is a contradiction because $W_i$ would have been selected in line 2 instead of $Y$ for the first execution of line 6. Thus, the first $j-1$ percolations in the run above only involve nodes to the right of $Z$ in $\beta$. Then, the run above is equivalent to
\begin{center}
\begin{tabular}{l}
For $i=1$ to $j-1$ do\\
\hspace{0.2cm} Percolate left-to-right in $\beta$ the rightmost node in $\alpha$ that has not been percolated before\\
If $Y \rightarrow Z$ is in $G$ then cover and reverse $Y \rightarrow Z$ in $G$\\
Interchange $Y$ and $Z$ in $\beta$\\
For $i=j$ to $n$ do\\
\hspace{0.2cm} Percolate left-to-right in $\beta$ the rightmost node in $\alpha$ that has not been percolated before
\end{tabular}
\end{center}
which is exactly the run of Method B2.

\end{proof}

\end{document}